\def\eqref#1{equation~\ref{#1}}
\def\1{\bm{1}}
\def\ve{{\bm{e}}}
\def\vh{{\bm{h}}}
\def\vx{{\bm{x}}}
\def\vy{{\bm{y}}}
\def\mA{{\bm{A}}}
\def\mG{{\bm{G}}}
\def\mI{{\bm{I}}}
\def\mW{{\bm{W}}}
\def\mX{{\bm{X}}}
\DeclareMathAlphabet{\mathsfit}{\encodingdefault}{\sfdefault}{m}{sl}
\SetMathAlphabet{\mathsfit}{bold}{\encodingdefault}{\sfdefault}{bx}{n}
\def\sR{{\mathbb{R}}}
\def\emG{{G}}
\newtheorem{theorem}{Theorem}
\newtheorem{lemma}{Lemma}
\newtheorem{remark}{Remark}
\title{ Forward Target Propagation: A Forward-Only Approach to Global Error Credit Assignment via Local Losses}
\author{
  \textbf{Nazmus Saadat As-Saquib\textsuperscript{1}},
  \textbf{A N M Nafiz Abeer\textsuperscript{1}},
  \textbf{Hung-Ta Chien\textsuperscript{1}}\\
  \textbf{Byung-Jun Yoon\textsuperscript{1}},
  \textbf{Suhas Kumar\textsuperscript{2}},
  \textbf{Su-in Yi\textsuperscript{1}} \\
  \textsuperscript{1}Texas A\&M University, College Station, TX 77843 \\
  \textsuperscript{2}Sandia National Laboratories, Livermore, CA 94550 \\
  \texttt{\{saadatsaquib,nafiz.abeer,hchien000,bjyoon,yisuin\}@tamu.edu}\\
  \texttt{su1@alumni.stanford.edu}
}
\begin{document}

\maketitle

\begin{abstract}
Training neural networks has traditionally relied on backpropagation (BP), a gradient-based algorithm that—despite its widespread success—suffers from key limitations in both biological and hardware perspectives. These include backward error propagation by symmetric weights, non-local credit assignment, and frozen activity during backward passes. We propose Forward Target Propagation (FTP), a biologically plausible and computationally efficient alternative that replaces the backward pass with a second forward pass. FTP estimates layer-wise targets using only feedforward computations, eliminating the need for symmetric feedback weights or learnable inverse functions, hence enabling modular and local learning. We evaluate FTP on fully connected networks, CNNs, and RNNs, demonstrating accuracies competitive with BP on MNIST, CIFAR-10, and CIFAR-100, as well as effective modeling of long-term dependencies in sequential tasks. Moreover, FTP outperforms BP under quantized low-precision and emerging hardware constraints while also demonstrating substantial efficiency gains over other biologically inspired methods such as target propagation variants and forward-only learning algorithms. With its minimal computational overhead, forward-only nature, and hardware compatibility, FTP provides a promising direction for energy-efficient on-device learning and neuromorphic computing.
\end{abstract}

\section{Introduction}

Backpropagation (BP) has been the foundational algorithm for training neural networks, driving the success of deep learning in tasks such as image recognition, language modeling, and decision-making \cite{rumelhart1986backprop, attention_vashwani, feweshot_language}. Despite its proven effectiveness, BP faces several limitations in terms of both biological plausibility and emerging hardware compatibility. One significant issue is its reliance on symmetric weight transport between the forward and backward passes, a requirement that does not align with how biological neurons function, where synaptic updates are not typically symmetric \cite{whittington2019credit}. Moreover, during training on edge devices and emerging analog in-memory computing hardware such as resistive random access memory (RRAM) and phase change memory (PCM) crossbars, symmetric weight transport poses additional challenges \cite{Yi2023, Yi_SONOS, ISAAC}. In each iteration, updated weight matrices must be written and verified in their transposed form for error propagation back through the network. However, achieving perfect symmetry in the analog hardware between the forward and backward matrices during this process is often unfeasible \cite{insitu_memristtor}, leading to performance degradation. This issue becomes even more severe in edge applications, where low-bit precision constraints further exacerbate the problem \cite{lowbit_issues}. Additionally, BP propagates a global error signal backward through multiple layers, in contrast to the localized learning signals that characterize biological neural systems \cite{Lillicrap2020}. Moreover, the weight updates in BP are computed using information from distant layers, implying that neurons must access far-off signals during learning. This non-locality contradicts the local learning mechanisms observed in biological systems. BP also suffers from the vanishing and exploding gradient problems, where gradients can either become too small to make meaningful updates or too large, destabilizing training. This issue is especially prominent in very deep or recurrent networks \cite{bengio1994learning}. Lastly, the assumption of layer-wise processing and exact error computation at each layer does not map onto biological learning systems, which operate in a noisier, more heuristic manner rather than relying on precise gradient-based methods \cite{crick1989recent, scellier2023energy_suhas}. These challenges have motivated research into biologically plausible alternatives that better match the way the brain might perform credit assignment during learning \cite{Lillicrap2020, lillicrap2016random, bengio2014target, Hinton2022TheFA, PredictiveCA} . This paradigm is worth exploring because the brain achieves remarkable efficiency and adaptability through local learning rules, providing valuable insights for developing energy-efficient learning algorithms.

In this work, we introduce Forward Target Propagation (FTP), a biologically inspired, forward-only learning algorithm that addresses several limitations of backpropagation, including the need for symmetric weight transport, non-local error propagation, and vanishing gradients. Motivated by evidence from biological systems, where learning is believed to occur via local computations and forward signaling, FTP eliminates the backward pass entirely by estimating layer-wise targets through a second forward sweep using the projections from the output predictions and target labels. Synaptic updates are then computed by minimizing the discrepancy between each layer’s activation and its corresponding target, thereby enabling local credit assignment without exact global gradient information. By combining biologically inspired principles with a mathematically grounded framework, FTP advances the development of biologically plausible, energy-efficient, and hardware-compatible alternatives to traditional backpropagation.

To summarize, our key contributions in this work are:
\begin{itemize}
    \item We introduce Forward Target Propagation (FTP), a biologically plausible, forward-only learning algorithm that enables local credit assignment without relying on backward error propagation or symmetric weight transport.
    \item We evaluate FTP across fully connected, convolutional, and recurrent neural networks on image classification and multivariate time-series forecasting tasks, demonstrating competitive accuracy with backpropagation and improved computational efficiency compared to other biologically inspired methods.
    \item We present both theoretical justification and empirical evidence showing strong alignment between FTP and backpropagation, measured via the angular similarity between their respective gradient directions during training.
    \item  We assess FTP’s robustness in low-precision and noisy hardware settings and benchmark its efficiency in TinyML scenarios, thereby showing that FTP achieves near-backpropagation-level efficiency while outperforming other biologically plausible algorithms. This highlights FTP's suitability for edge and neuromorphic computing.
\end{itemize}

\section{Related Work}

This section discusses biologically plausible approaches to credit assignment, focusing on different feedback mechanisms and forward-only models that address the limitations of backpropagation. The subsections cover random feedback algorithms, other biologically motivated credit assignment methods, and forward learning models.

\subsection{Conventional and Biologically Plausible Methods}

BP has long served as the backbone of neural network training, utilizing both forward and backward passes to compute error gradients. During the forward pass, input signals are propagated through the network to the output layer, where errors are assessed by comparing the predicted outputs to the target values. The subsequent backward pass propagates these error derivatives back through the network using the same weights that were used during the forward pass. The outer product of the activity vectors from both passes constructs the gradient matrices of the global objective. While BP has proven effective in many applications, it is often criticized for its biological implausibility, raising questions about whether similar mechanisms exist in biological systems \cite{rumelhart1986backprop}.

To address the limitations of backpropagation, several biologically inspired learning algorithms have been proposed. Feedback Alignment (FA) \cite{lillicrap2016random} replaces symmetric feedback with fixed random weights, showing that learning can still succeed without precise weight mirroring. Direct Feedback Alignment (DFA) extends this idea by projecting output errors directly to hidden units via random matrices \cite{nokland2016direct}. A simplified variant, Direct Random Target Projection (DRTP), further removes dependence on global error signals by using target labels to update hidden layers \cite{frenkel2021learning}. Although these methods mitigate issues such as weight transport and update locking, they remain limited in biological plausibility due to their reliance on global error signals and frozen network dynamics. Moreover, DRTP has demonstrated lower accuracy and limited scalability compared to more recent approaches, as evidenced by their results.

\begin{figure*}[t]
\vskip -.2
in
\centering
\begin{subfigure}{0.235\textwidth}
    \centering
    \includegraphics[width=1\linewidth]{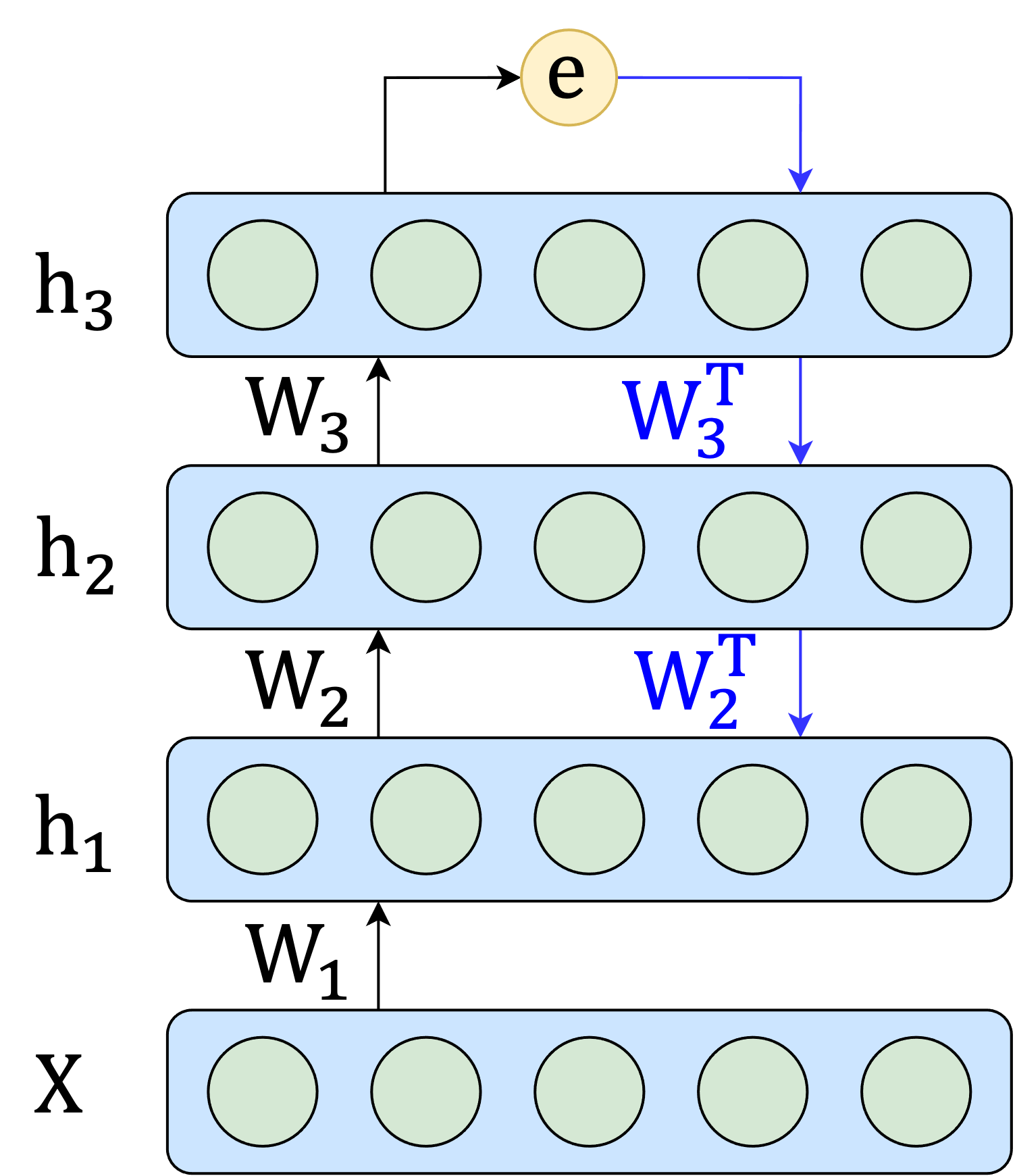}  
    \caption{BP}
    \label{fig:BP_arch}
\end{subfigure}
\hfill
\begin{subfigure}{0.223\textwidth}
    \centering
    \includegraphics[width=0.88\linewidth]{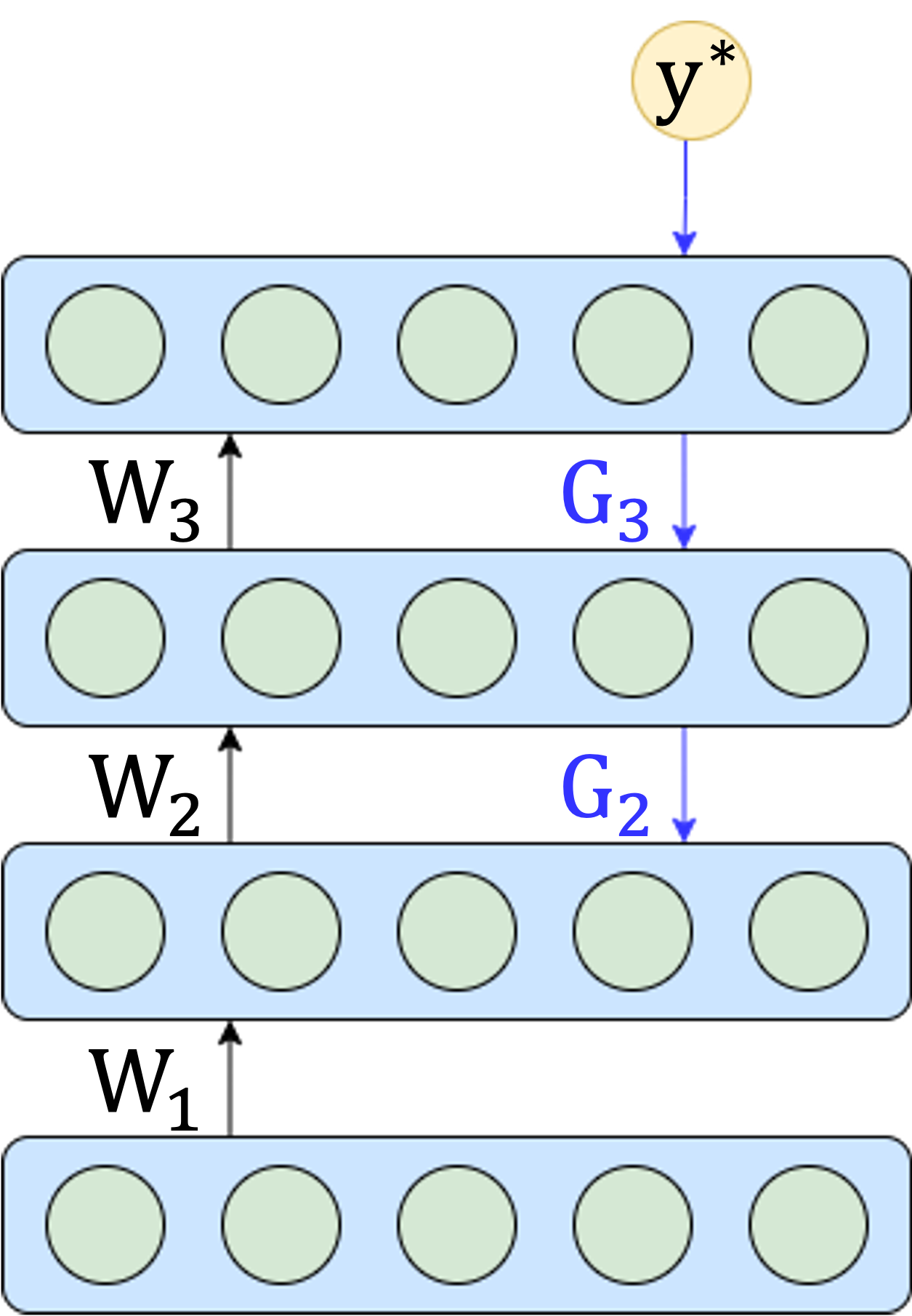}  
    \caption{DTP}
    \label{fig:dtp_arch}
\end{subfigure}
\hfill
\begin{subfigure}{0.232\textwidth}
    \centering
    \includegraphics[width=1.05\linewidth]{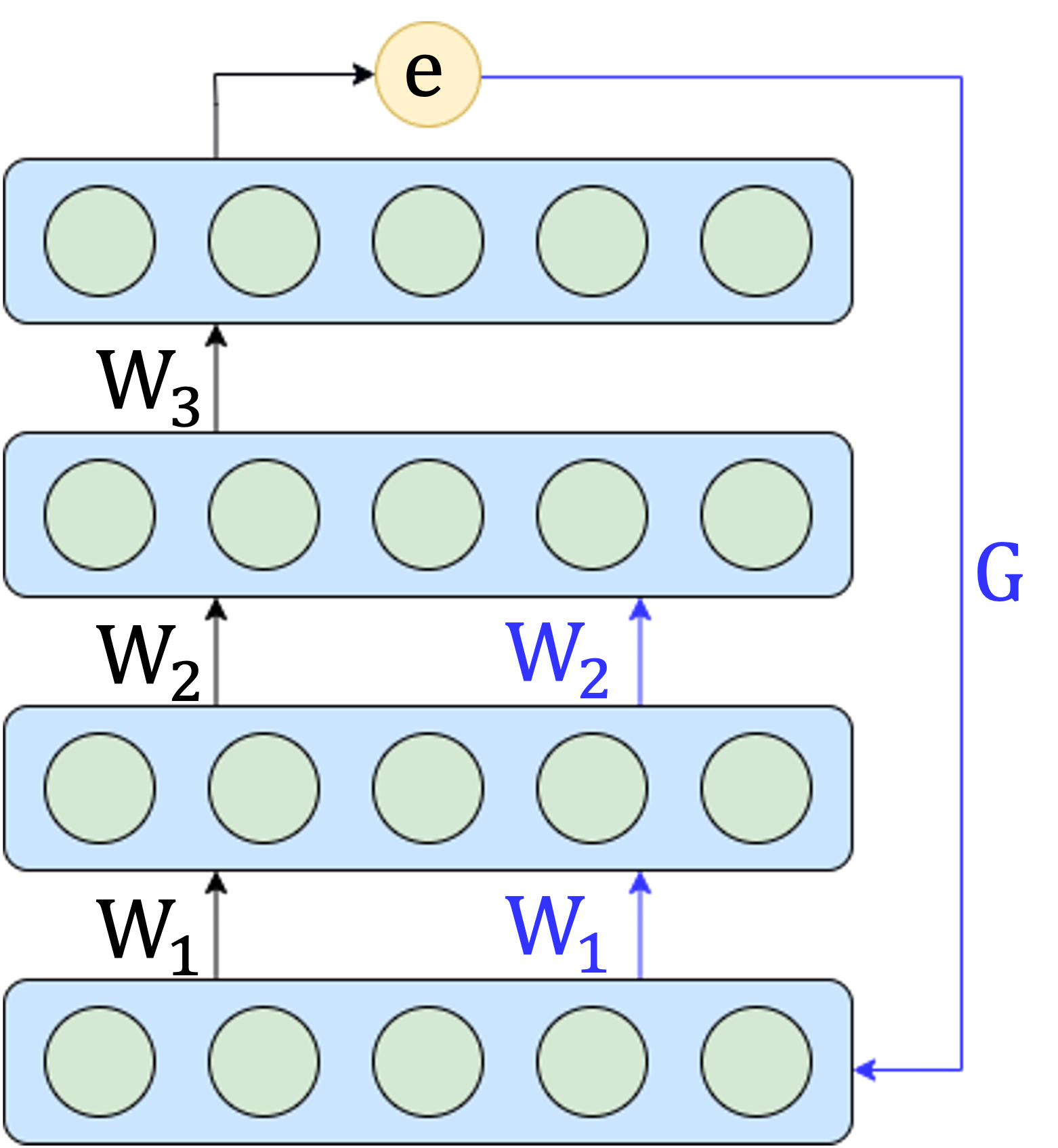}  
    \caption{PEPITA}
    \label{fig:pepita_arch}
\end{subfigure}
\hfill
\begin{subfigure}{0.23\textwidth}
    \centering
    \includegraphics[width=1.087\linewidth]{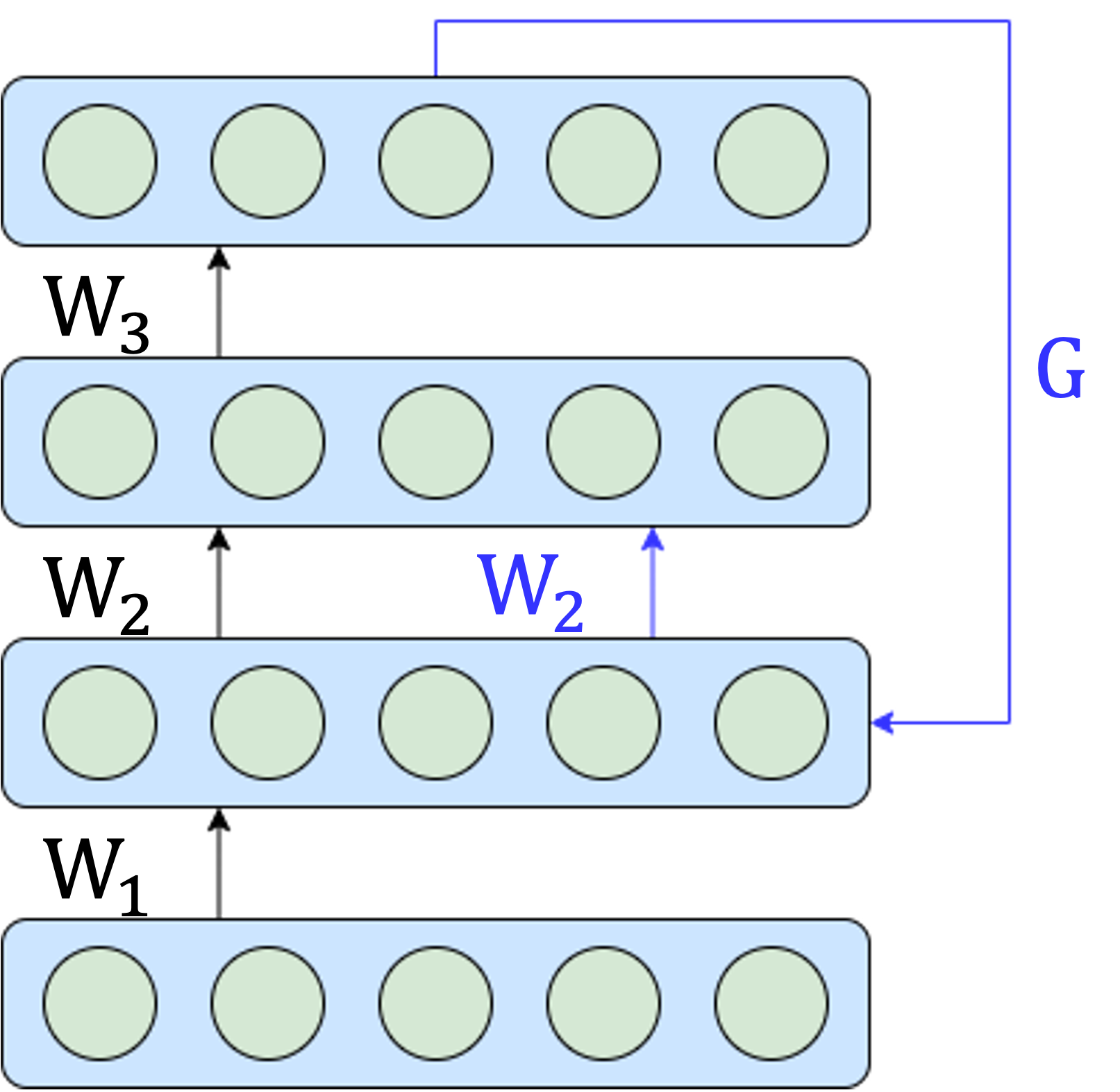}  
    \caption{FTP}
    \label{fig:ftp_arch}
\end{subfigure}
\caption{Configuration of various learning methods: a) BP, b) DTP, c) PEPITA, and d) FTP. Black arrows represent forward paths, while blue arrows indicate error/target paths. Each $\mW$ denotes the learnable weight matrices of hidden layers, whereas the top-down feedback path includes \textbf{learnable} matrices ($\mG_3,\mG_2$ for DTP) or \textbf{randomly initialized and frozen} matrix $\mG$ (for PEPITA and FTP).}
\label{fig:architecture_comparison}
\vskip -.2in
\end{figure*}

Equilibrium Propagation provides a biologically inspired alternative to backpropagation by computing gradients through equilibrium states in energy-based models \cite{Yi2023, scellier2017equilibrium, scellier2023energy_suhas}. Target Propagation (TP) and Difference Target Propagation (DTP) assign credit via layer-wise targets instead of global error gradients, avoiding a full backward pass \cite{bengio2014target, lee2015difference, dtp_meulemans, dtp_maxence}. While DTP improves target estimation with correction terms, both methods incur high computational costs in deep networks. Local Representation Alignment (LRA-E) refines this approach by computing local targets from downstream errors using fixed feedback pathways, eliminating the need for gradient signals or symmetric weights \cite{orobia2021local}. Its recursive variant, rec-LRA, scales this process by aligning internal representations layer-by-layer, enabling parallel updates across deep architectures \cite{orobia2021reclra}. Though effective on large datasets, both methods introduce additional computational and memory overhead as well as require tuning for stable convergence. Fixed Weight DTP (FWDTP) simplifies DTP by using fixed random feedback matrices, but continues to face performance degradation in deeper models \cite{shibuya2021fixed}.


These biologically inspired training schemes seek to achieve performance levels comparable to BP in various pattern recognition tasks while mitigating issues associated with the weight transport problem. However, challenges such as non-locality, freezing of neural activity, update locking, and computational burden remain prevalent, which underscores the need for further exploration in this domain.

\subsection{Forward Learning Methods}
Forward learning methods offer a biologically plausible alternative to backpropagation by avoiding explicit backward passes and instead utilizing feedforward mechanisms to propagate learning signals. These methods attempt to mimic how the brain processes information and learns from experience without the biologically unrealistic need for symmetric weight updates that backpropagation entails. However, despite their promise, each forward learning method faces its own set of limitations.

PEPITA is one such forward learning method \cite{pepita_della}. PEPITA avoids backpropagation by performing  a second forward pass with input modulated by the output error via a fixed random matrix \cite{pepita_della}. Weight updates are computed from the difference between standard and modulated activations, enabling local learning without symmetric weights. This method removes the weight symmetry constraint, addressing a major issue of backpropagation, but it comes at the cost of increased computational burden. 
Forward-Forward Learning (FF), introduced by Hinton \cite{Hinton2022TheFA}, eliminates backward error propagation by training neural networks using only forward passes. It compares neuron activations under "positive" and "negative" data, reinforcing those aligned with the desired output. By avoiding backward gradients, FF addresses issues such as weight transport and frozen activity. However, the lack of top-down influence limits inter-layer coordination, which is partially resolved by adding recurrent connections—at the cost of significantly increased computational complexity. FF struggles with scalability and underperforms on architectures beyond the simplest fully connected (FC) networks such as convolutional neural network (CNN) \cite{dendritic_ll}. Predictive Forward-Forward Learning (PFF) extends FF by introducing a generative circuit that predicts neural activity to guide local updates \cite{Ororbia2023ThePF}. Although more biologically plausible and less reliant on error feedback than predictive coding, PFF adds computational overhead due to repeated prediction steps.

Despite these advancements, forward learning methods are still limited in terms of their efficiency and performance. Although they address some of the key biological implausibilities of BP such as symmetric weight transport and global error gradients, they often introduce new issues including higher computational demands and difficulties in scaling to deeper networks. 

\section{Forward Target Propagation}
\subsection{Overview of Algorithm}
Forward Target Propagation (FTP) is a novel algorithm that replaces the backward pass in neural networks with a second forward pass to compute learning signals. By eliminating the need for backward error propagation through transposed matrices (i.e., symmetric weights), FTP aligns more closely with biological neural systems, which are believed to rely on local, feedforward mechanisms for learning. At the core of FTP lies the concept of layer-wise targets similar to target propagation algorithms \cite{bengio2014target, lee2015difference, dtp_meulemans, shibuya2021fixed}, which define the desired activations that each layer should produce during learning. These targets enable a local learning mechanism: instead of relying on global error signals propagated backward through the network, each layer adjusts its weights to match its activation with its corresponding target. The process begins with a standard forward pass (first forward pass), where the input data \( \mX \) is propagated through the network to generate the output activation \( \vh_L \) based on the current weights. After obtaining this prediction, the target for the first hidden layer, denoted \( \boldsymbol{\tau}_1 \), is estimated using a difference-corrected projection. Specifically, the final-layer output \( \vh_L \) and the ground-truth label \( \vy \) are independently projected through a fixed random matrix \( \mG \), initialized with zero mean and small standard deviation. The resulting estimate is computed as:
\begin{equation}
\boldsymbol{\tau}_1 = \sigma(\mG \vy) - \sigma(\mG \vh_L) + \vh_1,
\end{equation}
where \( \sigma(\cdot) \) is the element-wise nonlinearity, and \( h_1 \) is the activation of the first hidden layer from the first forward pass. This formulation ensures that the correction applied to \( h_1 \) guides it in a direction aligned with the output target \( y \), while preserving locality and forward-only computation.

The estimated targets then serve as inputs for the second forward pass, during which the same feedforward weights used in the first forward computation are reused to propagate the target signal and estimate layer-wise targets. Finally, we compare the activations from the first forward pass—computed from the actual input data—with the targets generated during the second forward pass to evaluate the alignment between the predicted activations and the corresponding target values at each layer. The weights are then updated by minimizing a local loss between each layer's activation and its corresponding target, thereby enabling layer-wise local learning without relying on backward gradient propagation or symmetric feedback weights. For comparison, the configuration of various learning methods—BP, DTP, PEPITA, and FTP—is shown in Figure \ref{fig:architecture_comparison}.

\subsection{Target Estimation and Learning}
In a feedforward neural network, the process begins with input data \( \mX \) and corresponding output labels \( \vy \). The activation values for the input layer (layer 0) are given by: \( \vh_0 = \mX \).
For each subsequent layer \( i \) (where \( i = 1, 2, \ldots, L \), and \( L \) is the final layer), the activation values are computed as:
\begin{equation}
\vh_i = \sigma(\mW_i \vh_{i-1}) = f_i(\vh_{i-1}),  
\end{equation}
where \( \mW_i \) is the weight matrix for layer \( i \), \( \vh_{i-1} \) is the activation from the previous layer, and \( \sigma \) is the activation function applied element-wise (e.g., \textit{tanh}, sigmoid).\\
Each layer in FTP is then assigned a target value that is computed through forward propagation of the initial target signal. 
We estimate the first target \( \tau_1 \), which is the target for the first hidden layer, using a fixed random matrix \( \mathbf{G} \), through a difference-corrected projection that contrasts the label and current output in a transformed space:
\begin{equation}
\boldsymbol{\tau}_1 = \sigma(\mG {\vy}) - \sigma(\mG \vh_L) + \vh_1 
\end{equation}
This formulation provides a top-down feedback mechanism to estimate layer-wise targets without relying on gradient backpropagation. The correction term, \( \vh_{1}  - \sigma(\mG \vh_L) \), compensates the deviation from the ideal inverse matrix and encodes the target shift in the activation space of the first hidden layer.


To transmit this signal to deeper layers, we propagate targets recursively using the same feedforward weights used in the first forward pass:
\begin{equation}
\boldsymbol{\tau}_i = \sigma(\mathbf{W}_i \boldsymbol{\tau}_{i-1}), 
\end{equation}
for \( i = 2, 3, \ldots, L-1 \). 



Each layer’s weights are then updated by minimizing a local loss that encourages alignment between activation and target:
\begin{equation}
\mathcal{L}_i = 
\begin{cases}
\left\| \vh_i - \boldsymbol{\tau}_i \right\|_2^2 & \text{if } i < L \\
\mathcal{L} & \text{if } i = L,
\end{cases}
\end{equation}
where \( \mathcal{L} \) is the global loss (e.g., cross-entropy, mean-squared loss) used at the final layer.

The FTP weight update is obtained by performing standard gradient descent on the local loss \( \mathcal{L}_i \), leading to:
\begin{equation}
\Delta \mathbf{W}_i = -\eta \nabla_{\mathbf{W}_i} \mathcal{L}_i 
\end{equation}
where \( \eta \) is the learning rate. This update arises directly from minimizing the layer-wise squared loss between activation, \( h_i\), and target \( \tau_i\). 

This formulation demonstrates that FTP enables weight updates using only locally available information, where each update implicitly reflects a top-down influence from the global target through fixed projections. Unlike BP, which requires backward error derivative flow, symmetric weight transport, and tightly coupled layer-wise updates, FTP performs training entirely through forward computations. By estimating the first target from the final output using a single feedback matrix and subsequent 

\vspace{-1.7mm}
\noindent
\begin{minipage}[t]{0.53\textwidth}
   targets in a purely forward manner as well as minimizing local losses between activations and their corresponding targets, FTP supports biologically plausible learning while avoiding architectural constraints imposed by backward connectivity. This also eliminates the need to compute the first hidden layer’s activation from the input in a second forward pass, unlike other forward-learning algorithms that do so using modulated inputs or negative image samples \cite{pepita_della, Hinton2022TheFA}. As shown in \cref{appendix_theory}, this mechanism introduces the global error $\ve$ through this forward target propagation, which produces updates that align closely with the BP gradients and exhibit similarity to Gauss-Newton directions. This forward-only, modular update scheme makes FTP particularly well-suited for hardware-efficient, asynchronous, and scalable implementations. The complete training procedure is outlined in Algorithm~\ref{alg:FTP}.
\end{minipage}
\hfill
\begin{minipage}[t]{0.45\textwidth}
\vspace{-5pt} 

\begin{algorithm}[H]
   \caption{Forward Target Propagation}
   \label{alg:FTP}
\begin{algorithmic}
   \STATE {\bfseries Given:} Input $\mX$, label $\vy$
   \STATE $\vh_0 = \mX$, $\boldsymbol{\tau}_L = \vy$
   \STATE \textcolor{gray}{\# First Forward Pass}
   \FOR{$i = 1$ to $L$}
       \STATE $\vh_i = \sigma(\mW_i \vh_{i-1})$
   \ENDFOR
   \STATE \textcolor{gray}{\# Estimate First Target}
   \STATE $\boldsymbol{\tau}_1 = \sigma(\mG \boldsymbol{\tau}_L) - \sigma(\mG \vh_L) + \vh_1$
   \STATE \textcolor{gray}{\# Second Forward Pass}
   \FOR{$i = 1$ to $L$}
       \IF{$1 < i < L$}
           \STATE $\boldsymbol{\tau}_i = \sigma(\mW_i \boldsymbol{\tau}_{i-1})$
       \ENDIF
       \STATE $\Delta \mW_i = \frac{\partial \mathcal{L}_i}{\partial \mW_i}$
   \ENDFOR
\end{algorithmic}
\end{algorithm}
\end{minipage}

\section{Results and Discussion}

\subsection{Methods}

We evaluated the performance of FTP against conventional BP and other biologically plausible algorithms such as PEPITA and DTP across two task categories: image classification using FC and CNN architectures, and time-series forecasting using recurrent neural network (RNN).


All models were trained using cross-entropy loss for classification tasks and mean squared error for time-series forecasting, optimized with momentum. For consistency across evaluations, each model family (FC, CNN, and RNN) was implemented using a consistent architecture across all corresponding datasets. Architectural specifications, activation functions, initialization schemes, and training schedules are provided in the Appendix ~\ref{model_details}.





\subsection{FTP for Image Classification}

\begin{table*}[h]
\vspace{-2mm}
    \caption{Test accuracy [\%] achieved by BP, DTP, PEPITA, and FTP in the experiments for fully connected (FC) networks and convolutional neural networks (CNNs).}
    \label{test_acc}
    \centering
    \begin{tabular}{lccc|ccc}
        \toprule
        & \multicolumn{3}{c|}{\textbf{FC Networks}} & \multicolumn{3}{c}{\textbf{CNNs}} \\
        Algorithm & MNIST & FMNIST & CIFAR-10 & MNIST & CIFAR-10 & CIFAR-100 \\
        \midrule
        BP      & 98.27{\tiny$\pm$0.08} & 89.10{\tiny$\pm$0.12} & 55.31{\tiny$\pm$0.26}  & 98.74{\tiny$\pm$0.05} & 64.88{\tiny$\pm$0.18} & 33.83{\tiny$\pm$0.26} \\
        DTP     & 96.51{\tiny$\pm$0.34} & 85.87{\tiny$\pm$0.41} & 48.67{\tiny$\pm$0.19}  & 97.23{\tiny$\pm$0.22} & 52.76{\tiny$\pm$0.27} & 23.51{\tiny$\pm$0.58} \\
        PEPITA  & 98.05{\tiny$\pm$0.11} & 88.41{\tiny$\pm$0.14} & 52.45{\tiny$\pm$0.28}  & 98.41{\tiny$\pm$0.24} & 56.17{\tiny$\pm$0.62} & 26.77{\tiny$\pm$0.87} \\
        FTP     & 97.98{\tiny$\pm$0.25} & 87.24{\tiny$\pm$0.21} & 52.57{\tiny$\pm$0.37}  & 98.28{\tiny$\pm$0.35} & 56.32{\tiny$\pm$0.82} & 26.84{\tiny$\pm$1.13} \\
        \bottomrule
    \end{tabular}
    \vspace{-2mm}
\end{table*}

\begin{table*}[h]
\caption{MACs (millions) for BP, DTP, PEPITA, and FTP on FC networks, along with percentage change in MAC (\%) with respect to BP.}
\label{MAC_origin}
\vskip -0.05in
\centering
\begin{tabular}{llcccc} 
\toprule
Dataset & \textbf{} & BP & DTP & PEPITA & FTP \\
\midrule
\multirow{2}{*}{MNIST}     
    & MAC (mil.) & 2.00 & 2.94 & 2.81 & \textbf{2.02} \\
    & MAC (\%)   & 0.00\% & 66\% & 41\% & \textbf{1\%} \\
\cmidrule(lr){1-6}
\multirow{2}{*}{CIFAR-10}  
    & MAC (mil.) & 6.69 & 9.97 & 9.86 & \textbf{6.71} \\
    & MAC (\%)   & 0\% & 35\% & 32\% & \textbf{0\%} \\
\cmidrule(lr){1-6}
\multirow{2}{*}{CIFAR-100} 
    & MAC (mil.) & 6.72 & 10.01 & 10.18 & \textbf{6.93} \\
    & MAC (\%)   & 0\% & 36\% & 34\% & \textbf{3\%} \\
\bottomrule
\end{tabular}
\vskip -0.08in
\end{table*}

We evaluated FTP on FC and CNN networks for image classification. FC models were tested on MNIST, FMNIST, and CIFAR-10, while CNNs were evaluated on MNIST, CIFAR-10, and CIFAR-100. Results are summarized in Table~\ref{test_acc}. Across FC architectures, FTP consistently outperforms DTP and achieves accuracy comparable to BP and PEPITA, but with significantly lower computational cost as shown in Table~\ref{MAC_origin}. Similarly, in CNNs, FTP performs competitively, demonstrating its ability to learn spatial features effectively without accuracy degradation.

Table~\ref{MAC_origin} also shows that FTP requires 30–60\% fewer multiply-accumulate (MAC) operations than other bio-plausible methods such as PEPITA and DTP, while remaining close to BP in computational cost. This highlights FTP’s practical efficiency, making it a compelling forward-only alternative for both dense and convolutional models—especially in resource-constrained settings.

\subsection{FTP in Capturing Long-Term Dependency}
We evaluated FTP on three standard time-series forecasting benchmarks—Electricity, METR-LA (Traffic), and Solar Energy \cite{Lai2017ModelingLA}—using a recurrent neural network (RNN) trained to predict the 25\textsuperscript{th} time sample from the previous 24. As in FC and CNN settings, FTP estimates targets using a difference-corrected projection through a fixed random matrix.

We evaluated model performance using two standard time-series metrics: Root Relative Squared Error (RRSE) and the Pearson correlation coefficient (CORR). These are defined as follows:

\begin{equation}
\mathrm{RRSE} = \sqrt{ \frac{\sum_{t} (y_t - \hat{y}_t)^2}{\sum_{t} (y_t - \bar{y})^2} }, \quad
\mathrm{CORR} = \frac{\sum_{t} (y_t - \bar{y})(\hat{y}_t - \bar{\hat{y}})}{\sqrt{\sum_{t} (y_t - \bar{y})^2} \sqrt{\sum_{t} (\hat{y}_t - \bar{\hat{y}})^2}}
\end{equation}

where $y_t$ is the ground truth value, $\hat{y}_t$ is the predicted value, $\bar{y}$ and $\bar{\hat{y}}$ are the means of the ground truth and predictions, respectively.

\begin{table}[h]
\vspace{-2mm}
\centering
\caption{Performance of RNN-based models on time-series datasets using BP, PEPITA, and FTP. $\uparrow$ ($\downarrow$) indicates that higher (lower) values are better.}
\label{tab:rnn_results}
\begin{tabular}{lcccccc}
\toprule
\multirow{2}{*}{Method} & \multicolumn{2}{c}{Electricity} & \multicolumn{2}{c}{METR-LA} & \multicolumn{2}{c}{Solar} \\
\cmidrule(lr){2-3} \cmidrule(lr){4-5} \cmidrule(lr){6-7}
 & RRSE$\downarrow$ & CORR$\uparrow$ & RRSE$\downarrow$ & CORR$\uparrow$ & RRSE$\downarrow$ & CORR$\uparrow$ \\
\midrule

BP (BPTT)          & 0.1059 & 0.9302 & 0.4680 & 0.8750 & 0.1161 & 0.9919 \\
PEPITA-RNN         & 0.1214 & 0.9929 & 0.4419 & 0.8967 & 0.1170 & 0.9932 \\
\textbf{FTP (Ours)} & 0.1219 & 0.9935 & 0.4398 & 0.8987 & 0.1177 & 0.9931 \\
\bottomrule
\end{tabular}
\vspace{-4mm}
\end{table}

Table~\ref{tab:rnn_results} shows that FTP consistently performs well across all datasets. It outperforms BP in correlation on all three benchmarks and remains highly competitive in RRSE. On METR-LA, FTP achieves both the lowest RRSE and highest CORR, underscoring its effectiveness in modeling long-term temporal dependencies. These results highlight FTP's potential as a biologically plausible alternative to BPTT, with improved robustness and compatibility for analog or forward-only learning environments, which is discussed in \ref{sec:robust_FTP}.

\subsection{Alignment of FTP with Backpropagation}
Gradient alignment is analyzed to assess how closely alternative learning rules approximate BP, as it provides a measure of how similar the parameter update directions are. This metric is particularly useful in the context of biologically plausible algorithms ~\cite{lillicrap2016random, nokland2016direct, metagradients}, where exact gradient computation is often replaced by approximate or learned feedback pathways. A smaller angle between the gradient directions of an approximate method and BP indicates a closer match in update directions, which often correlates with improved learning performance and better convergence. 
Here, we analyze the alignment between the gradients produced by FTP and those from standard BP. To quantify the alignment, we compute the cosine angle between these flattened gradient vectors throughout training. We use a fully connected network with two hidden layers, trained on MNIST for 100 epochs, and average the results over 10 random seeds. A scaling parameter \(\gamma\), introduced in detail below, is set to 1 at this point, which corresponds to the standard FTP target. The alignment angle for the output layer always remains at zero during training, since FTP and BP share identical gradient expressions for the final layer.  As shown in Figure~\ref{fig:alignment_w1w2}, the alignment angle for both hidden layers initially starts near \(90^\circ\), indicating no initial alignment between the FTP and BP gradients.  This is expected, as the fixed random feedback matrix \(\mG \) used in FTP is uncorrelated with the forward weights at initialization, unlike BP, which computes exact gradients via the chain rule. As training progresses, however, the angle steadily decreases, which reflects that FTP's updates increasingly align with those of BP. Gradually, the alignment converges to approximately \(40^\circ\) for the second hidden layer and \(50^\circ\) for the first, which demonstrates that FTP can approximate BP-like updates even without weight transport and gradient propagation.

To understand how the magnitude of the error signal affects alignment, we introduce a hyperparameter \( \gamma \) that scales the FTP target as 
\begin{equation}
\tau_1 = \gamma(\sigma(\mG \vy) - \sigma(\mG \vh_L)) + \vh_1 
\end{equation}
Setting \( \gamma = 1 \) yields the standard FTP formulation. We evaluated the effect of \( \gamma \) over the range [0.1, 1.5], and here we report results for \( \gamma = 0.5 \) and \( \gamma = 1.5 \) in Figure~\ref{fig:alignment_w1} and ~\ref{fig:alignment_w2} to highlight the impact of error signal scaling. Each setting was evaluated for 10 random seeds, and only the mean alignment is shown for visual clarity. When \( \gamma = 0.5 \), we observe stronger alignment in both hidden layers, compared to the case of \(\gamma = 1\). Conversely, increasing \( \gamma \) to 1.50 degrades the alignment. These results indicate that \( \gamma \) serves as a regularization hyperparameter, which governs the strength of the error signal and improves directional alignment with BP when appropriately tuned.

\begin{figure*}[t]
\vspace{-4mm}
    \centering
    \begin{subfigure}[t]{0.48\textwidth}
        \centering
        \includegraphics[width=\linewidth]{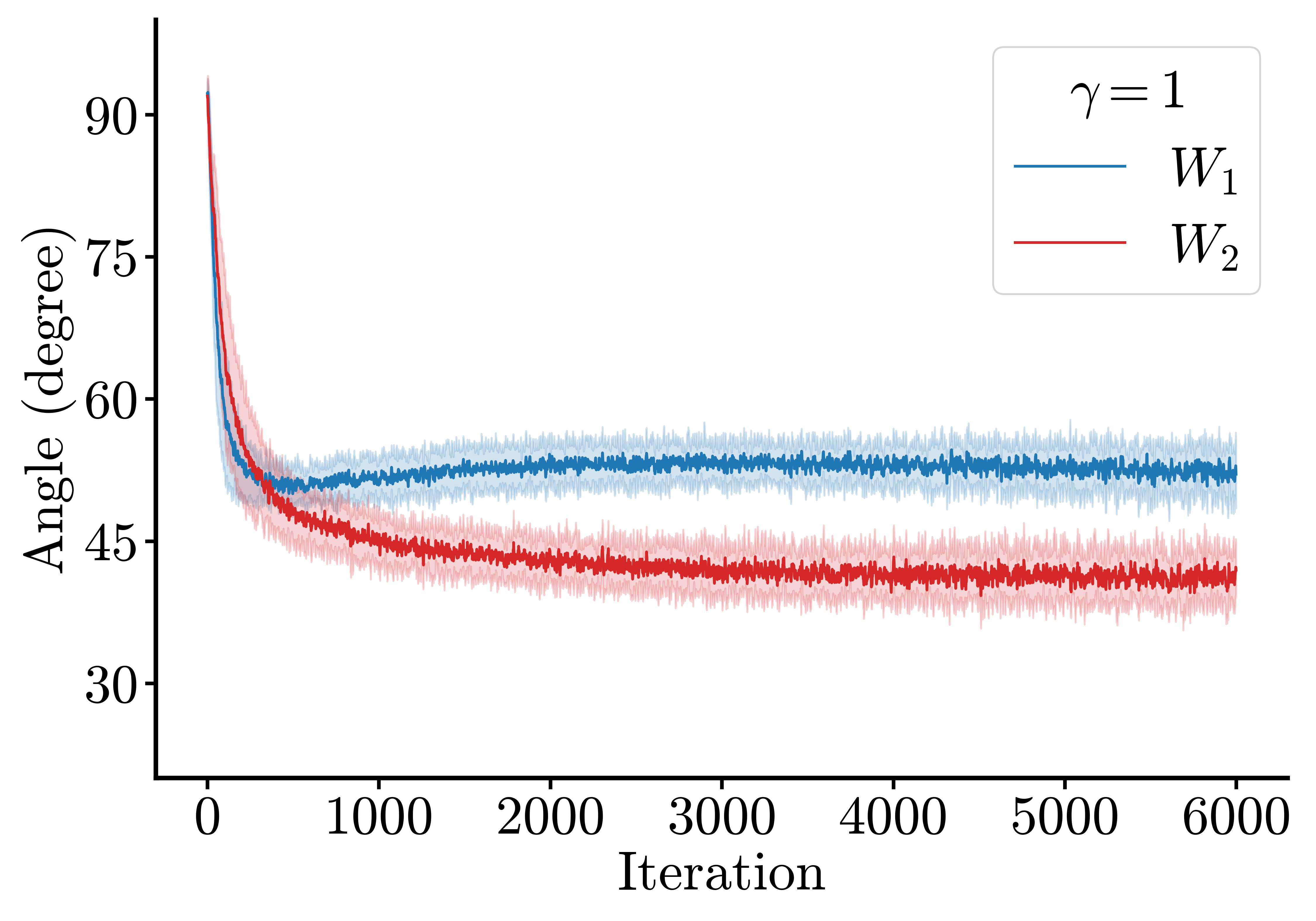}
        \caption{}
        \label{fig:alignment_w1w2}
    \end{subfigure}
    \hfill
    \begin{subfigure}[t]{0.48\textwidth}
        \centering
        \includegraphics[width=\linewidth]{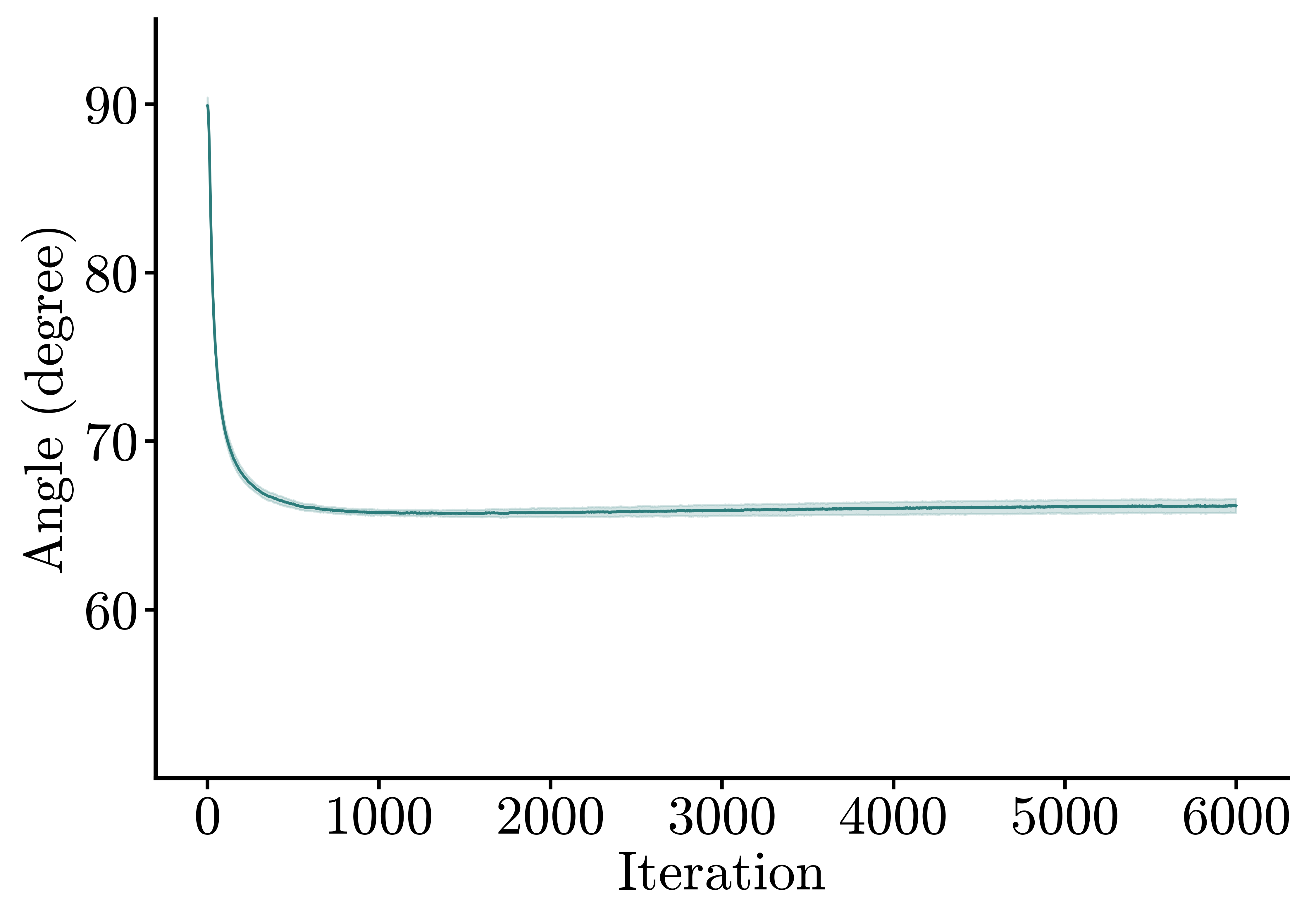}
        \caption{}
        \label{fig:alignment_wg}
    \end{subfigure}

    \vspace{-.5mm}

    \begin{subfigure}[t]{0.48\textwidth}
        \centering
        \includegraphics[width=\linewidth]{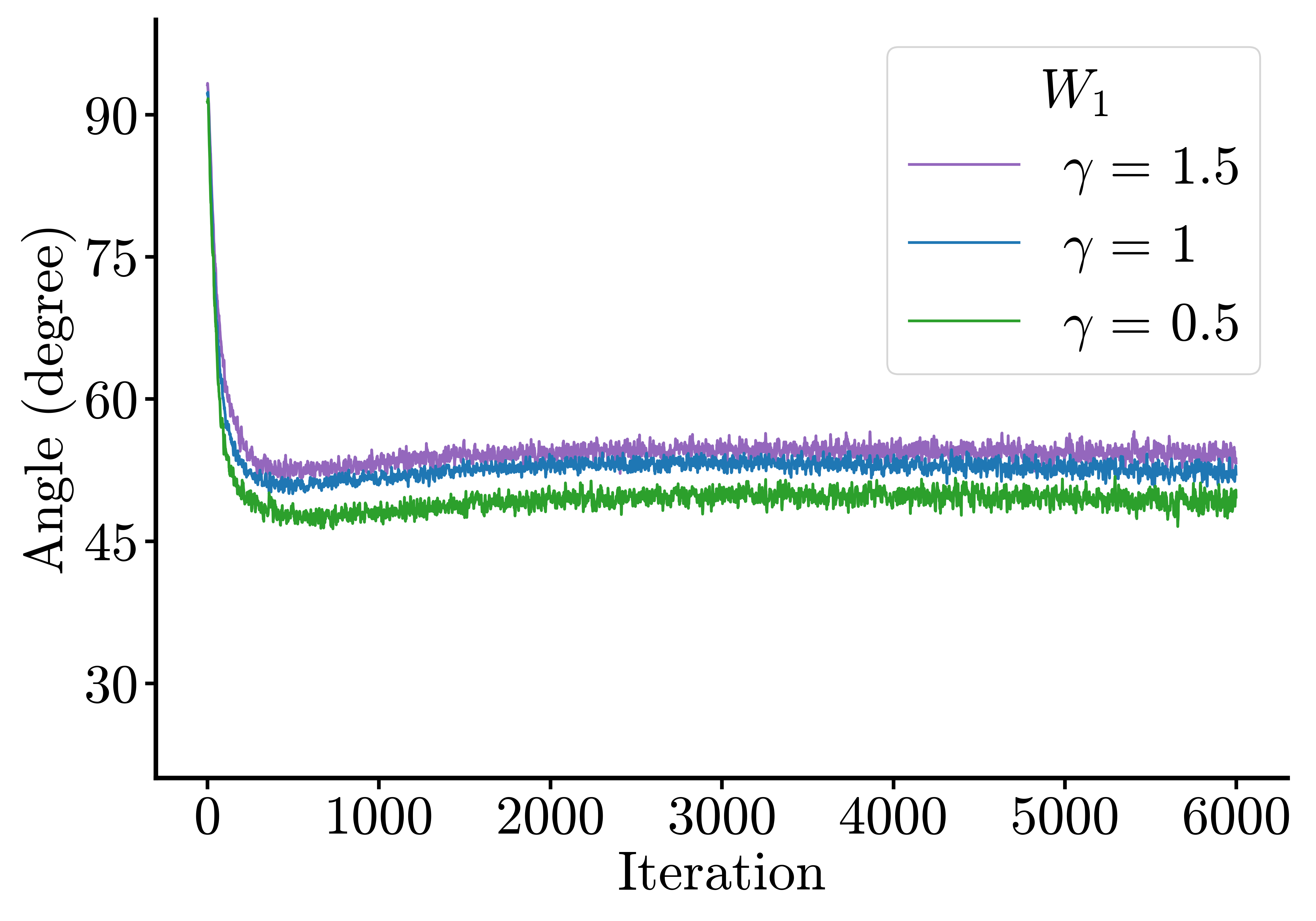}
        \caption{}
        \label{fig:alignment_w1}
    \end{subfigure}
    \hfill
    \begin{subfigure}[t]{0.48\textwidth}
        \centering
        \includegraphics[width=\linewidth]{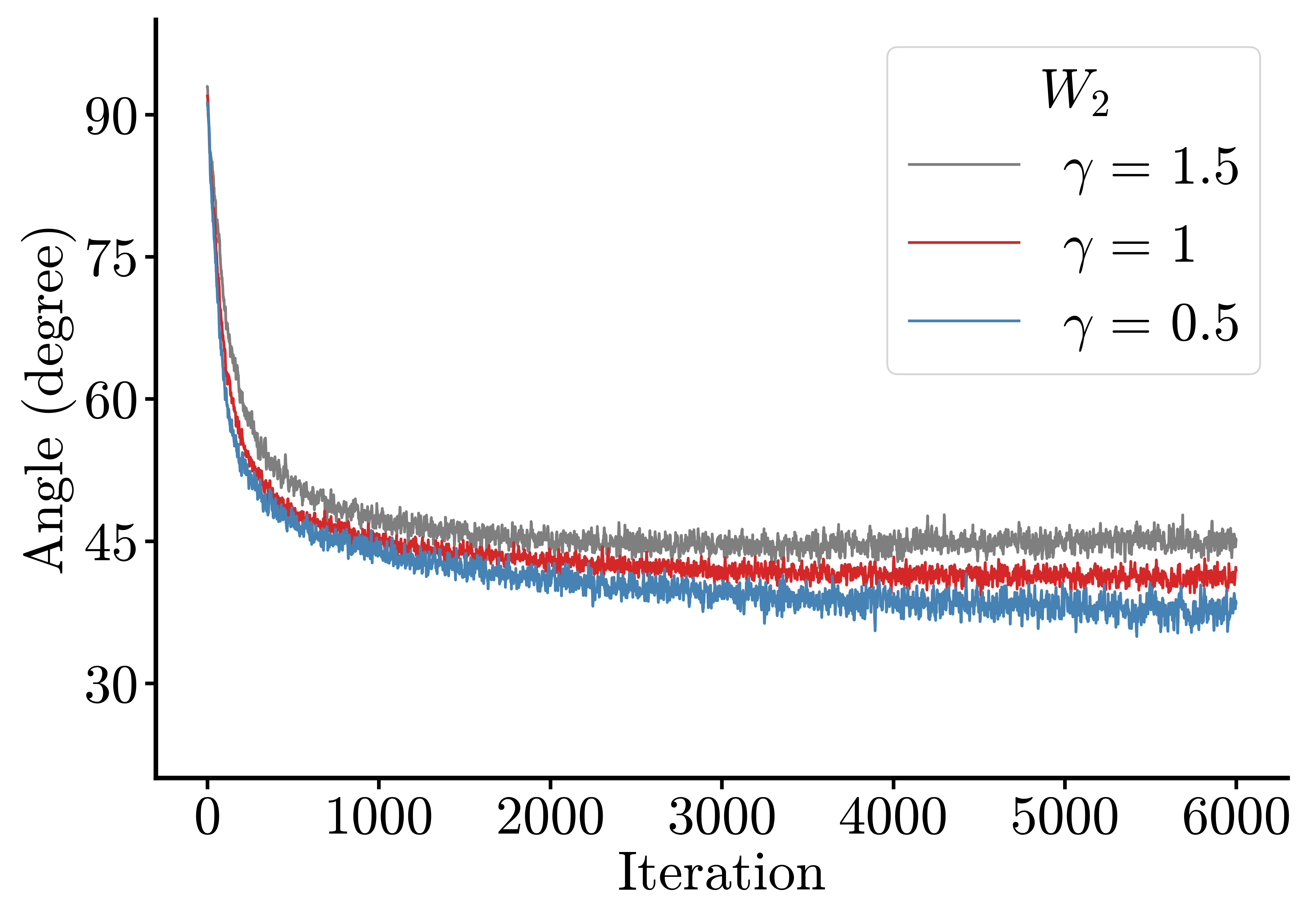}
        \caption{}
        \label{fig:alignment_w2}
    \end{subfigure}

    \caption{Evolution of alignment during training: (a) Alignment between gradient directions from FTP and BP with $\gamma = 1$; (b) Alignment between products of forward weight matrices  (\(\mW_2^T \mW_3^T \)) and the feedback matrix $\mG$; (c) Alignment of gradients for $\mW_1$ under varying $\gamma$; (d) Alignment of gradients for $\mW_2$ under varying $\gamma$.}
    \label{fig:combined}
    \vspace{-5mm}
\end{figure*}

In addition to gradient alignment, we analyze the structural alignment between the product of forward weights (i.e., \(\mW_2^T \mW_3^T \)) and the projection matrix \(\mG \), following the procedure in \cite{lillicrap2016random}. We flatten these vectors and compute the angle between them throughout training, which is depicted in Figure~\ref{fig:alignment_wg} In the initial stage, the angle is close to \( 90^\circ \), reflecting a random orientation. Over time, the angle steadily decreases, indicating that FTP promotes consistent forward–backward correlations. This behavior contrasts with the anti-alignment observed PEPITA~\cite{pepita_della}, and supports the hypothesis \cite{lillicrap2016random} that biologically plausible learning rules such as FTP naturally give rise to structured and coordinated representations.

\subsection{Performance of FTP in Emerging Hardware and Edge Devices}
\label{sec:robust_FTP}
BP relies on symmetry between forward and backward weights, which is difficult to maintain on emerging analog accelerators such as RRAM and other non-volatile memories, due to non-idealities including noise, programming errors, and variability. These asymmetries disrupt gradient flow and significantly degrade BP performance under low-bit precision. We demonstrate in Appendix~\ref{appendix_weightsymmtery_bp} how the performance of BP is susceptible to such non-ideal conditions. In contrast, FTP uses fixed random backward matrices, making it inherently robust to such hardware-induced asymmetry and capable of maintaining stable accuracy even in noisy, resource-constrained environments.

Programming errors, as a key source of non-ideality, can accumulate differently in FTP and BP. In FTP, the backward matrix \(\mG \) is randomly initialized and programmed only once at the start of training. Any programming error introduced during this initialization becomes embedded in \(\mG \) and remains fixed throughout the training process. In contrast, BP requires continual updates to the backward matrices to maintain symmetry with the forward weights—unless complicated bidirectional peripheral circuitry is implemented~\cite{Cross_Wan2022}. This repeated programming introduces the possibility of new errors at every training step, amplifying their overall impact. As BP fundamentally depends on maintaining forward-backward symmetry, it is sensitive to such errors.

\begin{figure*}[h]
\centering
\begin{subfigure}{0.48\textwidth}
    \vskip -.2in
    \centering
    \includegraphics[width=1\linewidth]{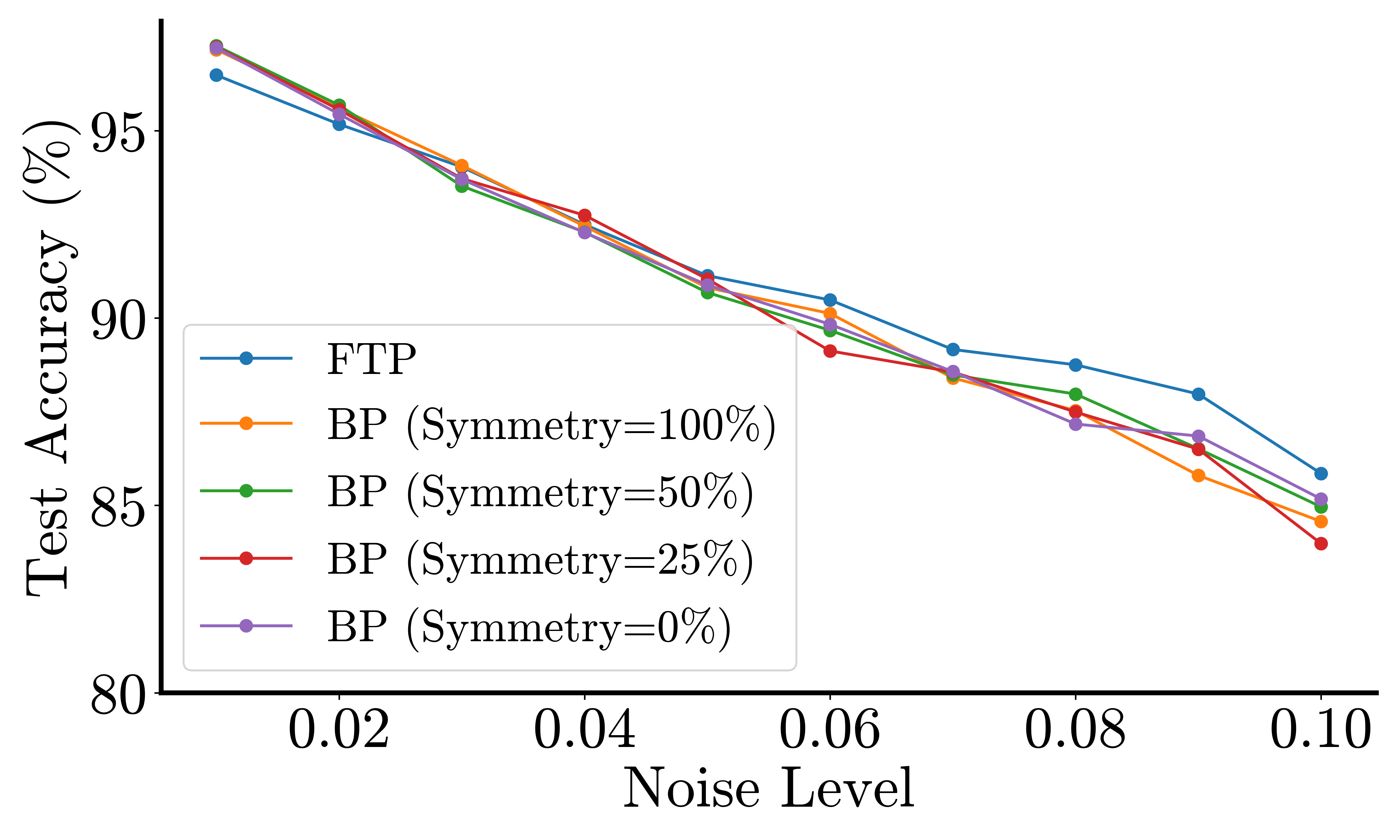}  
    \caption{} 
    \label{fig:alignment_1}
\end{subfigure}
\hfill
\begin{subfigure}{0.48\textwidth}
    \centering
    \includegraphics[width=1\linewidth]{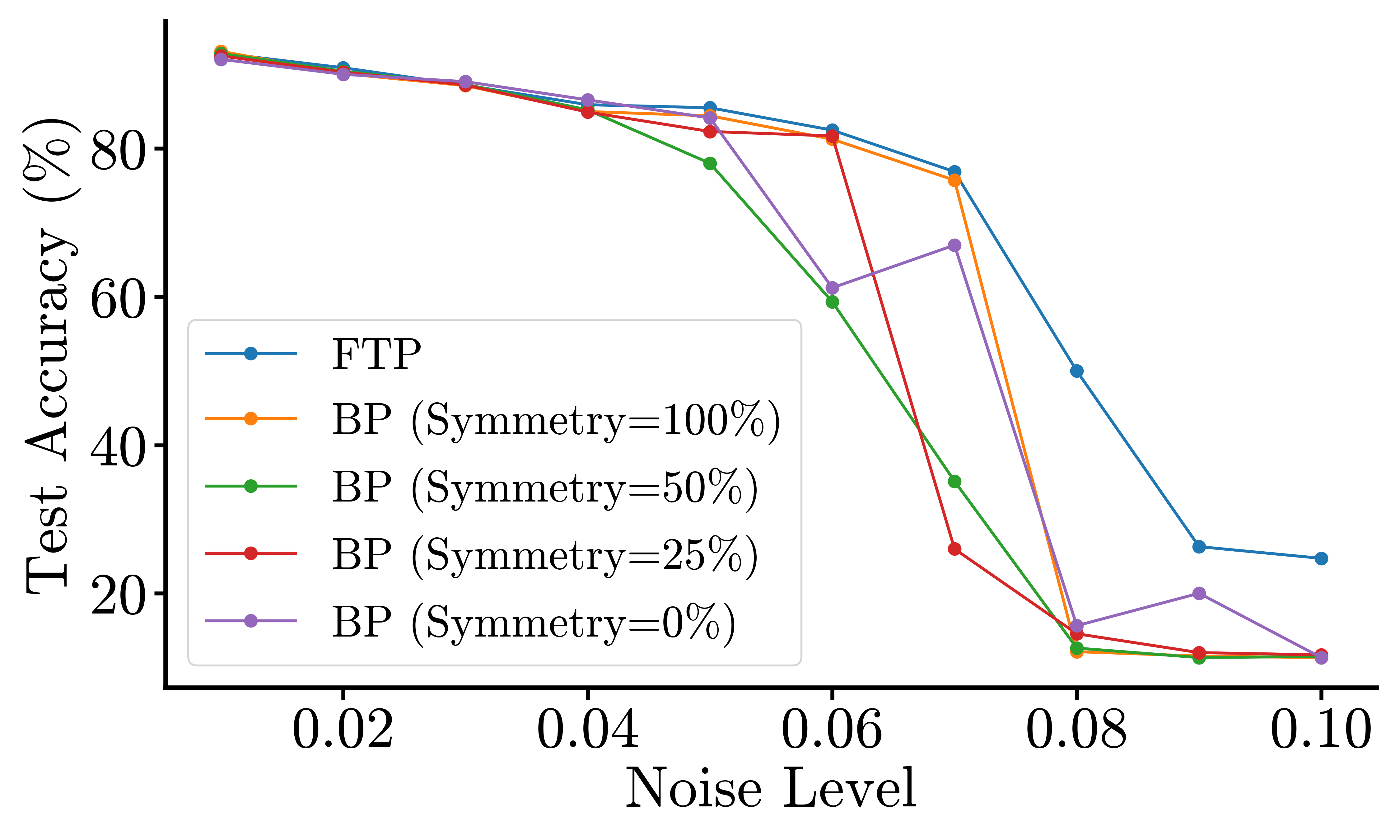}  
    \caption{} 
    \label{fig:alignment_2}
\end{subfigure}
\caption{Performance of FTP and BP when programming errors are considered for (a) 8-bit and (b) 4-bit precision devices.}
\label{programming_error}
\vspace{-5mm}
\end{figure*}

To evaluate this effect, we conducted experiments in which additive Gaussian noise was injected into weight values to simulate programming errors. The x-axis in Figure~\ref{programming_error} represents the noise level, denoted by \( \alpha \), which scales the standard deviation of the distribution: \( \text{std} = \alpha \times |w| \), where \( w \) is the weight being corrupted \cite{nonidealrram}. Thus, weights with larger magnitudes are subject to proportionally larger perturbations. As \( \alpha \) increases, the noise grows more disruptive. Our results, as illustrated in Figure~\ref{programming_error}, demonstrate that FTP outperforms BP under the presence of programming error, especially as noise levels increase. For instance, in both 8-bit and 4-bit systems, FTP maintains superior accuracy compared to BP, highlighting its robustness to programming errors in low-precision and non-ideal hardware.

\subsection{Hardware-Aware Analysis of FTP’s Efficiency}

To assess FTP’s suitability for resource-constrained environments such as TinyML applications, we estimated the number of MAC operations based on dataset sizes and epoch counts aligned with the MLCommons/Tiny Benchmark \cite{mlcommons2023github}, following a methodology similar to Pau \textit{et al.} \cite{ff_pepita_mlcommons}. This estimation offers a hardware-aware perspective on training efficiency under realistic deployment constraints. As shown in Table~\ref{model-dataset-table}, FTP consistently achieves lower or comparable MAC counts relative to BP and significantly lower computational overhead compared to other forward-only methods such as FF, PEPITA, and DTP. These reductions in MAC operations imply both lower power consumption and faster training—critical factors for edge devices that operate under tight energy and latency budgets. Combined with its robustness to noisy gradients and tolerance to analog hardware imperfections, FTP emerges as a promising candidate for on-device learning in low-power microcontroller-based systems.

\begin{table*}[h]
\vspace{-2mm}
\centering
\caption{Comparison of MAC (millions) for TinyML datasets with various algorithms and percentage change in MAC (\%) with respect to BP.}
\label{model-dataset-table}
\vspace{-2mm}

\resizebox{\textwidth}{!}{%
\begin{tabular}{lcccccccc}
\toprule
\multirow{2}{*}{Learning Method} & \multicolumn{2}{c}{DS-CNN/SC} & \multicolumn{2}{c}{MobileNet/VWW} & \multicolumn{2}{c}{ResNet/CIFAR10} & \multicolumn{2}{c}{AE/ToyADMOS} \\
\cmidrule(lr){2-3} \cmidrule(lr){4-5} \cmidrule(lr){6-7} \cmidrule(lr){8-9}
& MAC & MAC (\%) & MAC & MAC (\%) & MAC & MAC (\%) & MAC & MAC (\%) \\
\midrule
BP      & 7.7  & 0.00\%   & 22.4  & 0.00\%   & 37.1  & 0.00\%   & 0.7  & 0.00\%   \\
FF      & 10.9 & 42.58\%  & 31.5  & 40.70\%  & 50.4  & 35.89\%  & 1.1  & 49.51\%  \\
PEPITA  & 8.0  & 4.25\%   & 22.9  & 2.47\%   & 37.6  & 1.35\%   & 1.2  & 69.10\%  \\
DTP     & 17.1  & 122.7\%  & 50.7  & 126.7\%  & 76.5  & 106.1\%  & 1.5  & 104\%  \\
\textbf{FTP}     & \textbf{7.9}  & \textbf{2.51\%}   & \textbf{22.4}  & \textbf{0.33\%}   & \textbf{37.5}  & \textbf{0.96\%}   & \textbf{0.9}  & \textbf{23.03\%}   \\

\bottomrule
\end{tabular}%
}
\vskip -0.2in
\end{table*}

 \section{Conclusion}
We have introduced Forward Target Propagation (FTP), a biologically plausible and computationally efficient learning algorithm that serves as a forward-only alternative to backpropagation (BP). FTP achieves performance comparable to that of BP across a variety of architectures, including those designed for visual pattern recognition and long-term temporal modeling. A key feature of FTP is its ability to assign global credit through purely local losses, thereby eliminating the need for weight symmetry, non-local objective function, and backward signal propagation. Among recent bio-inspired algorithms, our results show that FTP achieves comparable accuracy with significantly greater efficiency—yielding lower multiply-accumulate (MAC) operations than FF, PEPITA, and DTP. FTP also exhibits strong resilience to low-bit precision and noisy hardware conditions, where conventional BP tends to degrade. These advantages position FTP as a highly effective solution for deployment in TinyML and neuromorphic systems. Current study focuses on small to medium-scale architectures, and extending FTP to deeper networks and larger datasets remains an important direction for future work. Furthermore, real hardware validation and exploration of FTP’s ability to support more complex learning mechanisms such as attention in large language models would offer valuable insights into its scalability and broader applicability.  While further development is needed to fully realize its potential at scale, our findings establish FTP as a robust, energy-efficient, and biologically plausible learning framework for embedded AI systems and neuromorphic computing.
\small 
\bibliographystyle{unsrt}
\bibliography{main}


\appendix

\newpage

\section{Analysis of FTP in Network with Two Hidden Layers}
\label{appendix_theory}
We consider a neural network with two hidden layers with activation $\sigma(\cdot)$. In \cref{sec:alignment_BP}, we proved that (under some conditions) the weight update direction for the hidden layers in FTP is within $90\degree$ of that of BP. In \cref{sec:alignment_GN}, we showed that the update direction for the hidden neurons in FTP is aligned with the Gauss-Newton direction. We followed the structure of the proofs from \cite{lillicrap2016random}.

For an input $\vx \in \sR^{d_x}$, we have
\begin{align}
    \vh_1 &= \sigma(\mW_1\vx) \\
    \vh_2 &= \sigma(\mW_2\vh_1) \\
    \vh_3 &= \sigma(\mW_3\vh_2) 
\end{align}

where $\vh_3 \in \sR^{d_y}$ is the output, and $\vh_1 \in \sR^{d_1}, \vh_2 \in \sR^{d_2}$ are hidden neurons. In forward target propagation (FTP), the losses to train the weight matrices $\mW_1, \mW_2$ and $\mW_3$ are as follows:
\begin{align}
    \mathcal{L}_1 &= \frac{1}{2} ||\operatorname{stop\_grad}(\boldsymbol{\tau}_1) - \vh_1||^2 \\
    \mathcal{L}_2 &= \frac{1}{2} ||\operatorname{stop\_grad}(\boldsymbol{\tau}_2) - \vh_2||^2 \\
    \mathcal{L}_3 &= \frac{1}{2} ||\vy - \vh_3||^2 
\end{align}
where $\vy \in \sR^{d_y}$ is the target value at the output layer and $\boldsymbol{\tau}_1, \boldsymbol{\tau}_2$ are target values (suggested by FTP) at the hidden layers according to the following.
\begin{align}
    \boldsymbol{\tau}_1 &= \vh_1 + \sigma(\mG\vy) - \sigma(\mG\vh_3) \label{eqn: target_h1}\\
    \boldsymbol{\tau}_2 &= \sigma(\mW_2\boldsymbol{\tau}_1) \label{eqn: target_h2}
\end{align}
$\mG$ is a $d_1 \times d_y$ projection matrix with $\emG_{i,j} \sim \mathcal{N}(0,1)$.

\subsection{Alignment in Weight Update Direction Between FTP and BP} \label{sec:alignment_BP}
In this section, we consider the same neural network described above, except for linear activation functions.
Denoting error signal as $\ve = \vy-\vh_3$, we have the following gradient direction for each weight matrix.
\begin{align}
    \pdv{\mathcal{L}_3}{\mW_3} &= \ve\vh_2^T\\
    \pdv{\mathcal{L}_2}{\mW_2} &= (\boldsymbol{\tau}_2 - \vh_2) \vh_1^T \\
    \pdv{\mathcal{L}_1}{\mW_1} &= (\boldsymbol{\tau}_1 - \vh_1) \vx^T
\end{align}
Note the FTP has the same gradient for $\mW_3$ as in BP. 
From \cref{eqn: target_h1,eqn: target_h2}, we have
\begin{align*}
    \boldsymbol{\tau}_1 -  \vh_1 &= \mG\vy - \mG\vh_3 = \mG\ve \\
    \boldsymbol{\tau}_2 - \vh_2 &= \mW_2\boldsymbol{\tau}_1 - \mW_2\vh_1 = \mW_2(\boldsymbol{\tau}_1 - \vh_1) = \mW_2\mG\ve
\end{align*}
Finally, we get the gradient direction for $\mW_1, \mW_2$ under FTP as:
\begin{align}
    \pdv{\mathcal{L}_2}{\mW_2} &= \mW_2\mG\ve \vh_1^T  \\
    \pdv{\mathcal{L}_1}{\mW_1} &= \mG\ve \vx^T
\end{align}

For BP,
\begin{align}
    \pdv{\mathcal{L}_2}{\mW_2} &= \mW_3^T\ve \vh_1^T \\
    \pdv{\mathcal{L}_1}{\mW_1} &= \mW_2^T\mW_3^T\ve \vx^T
\end{align}

We hypothesize the following alignment between FTP and BP's gradient direction for any non-zero $\ve$:
\begin{align}
    \langle \mG\ve, \mW_2^T\mW_3^T\ve \rangle > 0 \\
    \langle \mW_2\mG\ve, \mW_3^T\ve \rangle > 0
\end{align}
\\
\begin{lemma} \label{lemma_1}
If we initialize $\mW_1$ and  $\mW_3$ with zero entries and $\mW_2$ being initialized with $\mA$, then there exist some scalars $s_1, s_{\mW_1}, s_{\mW_2}, s_{\mW_3}$
at every time step (during training using FTP) such that,
\begin{align}
    \vh_1 &= s_1\mG\vy  \label{eqn:lemma_1_1}\\
    \mW_1 &= s_{\mW_1}\mG\vy\vx^T \label{eqn:lemma_1_2}\\
    \mW_2 &= \mA(\mI + s_{\mW_2}\mG\vy(\mG\vy)^T) \label{eqn:lemma_1_3}\\
    \mW_3 &= s_{\mW_3}\vy(\mA\mG\vy)^T \label{eqn:lemma_1_4}
\end{align}
\end{lemma}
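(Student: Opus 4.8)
The plan is to prove the four identities \eqref{eqn:lemma_1_1}--\eqref{eqn:lemma_1_4} simultaneously by induction on the gradient-descent step $t$, since the claimed structure of each matrix feeds into the others through the forward pass and the target computation. The inductive hypothesis is exactly the stated forms, but with time-indexed scalars $s_1^{(t)}, s_{\mW_1}^{(t)}, s_{\mW_2}^{(t)}, s_{\mW_3}^{(t)}$; the updates are the standard gradient-descent steps $\Delta\mW_i = -\eta\,\pdv{\mathcal{L}_i}{\mW_i}$ using the three FTP gradients already derived in \cref{sec:alignment_BP}. For the base case I would observe that the prescribed initialization $\mW_1=\vzero$, $\mW_2=\mA$, $\mW_3=\vzero$ is precisely the $t=0$ instance with all four scalars set to zero: then $\vh_1 = \mW_1\vx = \vzero = 0\cdot\mG\vy$, and each matrix identity reduces to a triviality.

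For the inductive step, the crux is to show that the forward pass collapses onto a single fixed chain of directions. Assuming the forms hold at step $t$, I would first propagate $\vh_1 = s_1\mG\vy$ through $\mW_2$, using $\mW_2\mG\vy = (1+s_{\mW_2}\|\mG\vy\|^2)\mA\mG\vy$, to obtain $\vh_2 = s_1(1+s_{\mW_2}\|\mG\vy\|^2)\,\mA\mG\vy$, a scalar multiple of $\mA\mG\vy$. Propagating this through $\mW_3 = s_{\mW_3}\vy(\mA\mG\vy)^T$ then yields $\vh_3 = s_{\mW_3}s_1(1+s_{\mW_2}\|\mG\vy\|^2)\|\mA\mG\vy\|^2\,\vy$, a scalar multiple of $\vy$. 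Hence the error collapses to $\ve = \vy-\vh_3 = k\vy$ for a single scalar $k$, and this is the one observation that makes the entire argument close.

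Given $\ve = k\vy$, I would substitute into the three gradient expressions and read off that each update is a rank-one outer product in exactly the required direction: $\pdv{\mathcal{L}_1}{\mW_1} = \mG\ve\,\vx^T = k\,\mG\vy\,\vx^T$ keeps $\mW_1 \propto \mG\vy\,\vx^T$; $\pdv{\mathcal{L}_2}{\mW_2} = \mW_2\mG\ve\,\vh_1^T$ is (via the same identity $\mW_2\mG\vy \propto \mA\mG\vy$) a multiple of $\mA\mG\vy(\mG\vy)^T$, which is absorbed into the $s_{\mW_2}$ term so that $\mW_2$ retains the shape $\mA(\mI + s_{\mW_2}^{(t+1)}\mG\vy(\mG\vy)^T)$; and $\pdv{\mathcal{L}_3}{\mW_3} = \ve\,\vh_2^T \propto \vy(\mA\mG\vy)^T$ keeps $\mW_3 \propto \vy(\mA\mG\vy)^T$. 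Finally $\vh_1$ at step $t+1$ equals $\mW_1^{(t+1)}\vx \propto \|\vx\|^2\,\mG\vy$, closing the induction. The byproduct is a set of explicit scalar recurrences whose coefficients involve only the fixed constants $\|\vx\|^2$, $\|\mG\vy\|^2$, and $\|\mA\mG\vy\|^2$.

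The main obstacle — indeed the only nontrivial step — is establishing that $\vh_3$ stays collinear with $\vy$, equivalently $\ve \propto \vy$; this hinges on the forward-direction chain $\mG\vy \mapsto \mA\mG\vy \mapsto \vy$ being preserved exactly by the particular low-rank structure of the three matrices. Once that collinearity is secured, verifying that each gradient has the correct rank-one form and that the scalars update consistently is routine bookkeeping. I would therefore foreground the direction-chain argument and present the scalar recurrences merely as a confirmation rather than as the substance of the proof.
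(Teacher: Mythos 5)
Your proposal is correct and follows essentially the same route as the paper's own proof: induction on the training step with time-indexed scalars, the base case $s_1^{(0)}=s_{\mW_1}^{(0)}=s_{\mW_2}^{(0)}=s_{\mW_3}^{(0)}=0$, the key collapse $\vh_2 \propto \mA\mG\vy$, $\vh_3 \propto \vy$, hence $\ve = (1-s_3)\vy$, and then absorbing each rank-one gradient update into the corresponding scalar recurrence. Your identity $\mW_2\mG\vy = \bigl(1+s_{\mW_2}\|\mG\vy\|^2\bigr)\mA\mG\vy$ is exactly the inline computation the paper performs, so the two arguments coincide step for step.
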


\begin{proof}

Here, we provide the proof by induction. At initialization ($t=0$) \cref{eqn:lemma_1_1,eqn:lemma_1_2,eqn:lemma_1_3,eqn:lemma_1_4} satisfies with $s_1=s_{\mW_1}= s_{\mW_2}= s_{\mW_3}=0$. Now, if this is true for any $t>0$, then we need to prove this holds for $t+1$.

\begin{align}
    \vh_3^{(t)} &= \mW_3^{(t)}\vh_2^{(t)} \\
    &= s_1^{(t)}s_{\mW_3}^{(t)}\vy(\mA\mG\vy)^T \mW_2^{(t)}\mG\vy\\
    &= s_1^{(t)} s_{\mW_3}^{(t)}\vy (\mA\mG\vy)^T \mA(\mI + s_{\mW_2}^{(t)}\mG\vy(\mG\vy)^T) \mG \vy
\end{align}

\begin{align*}
    (\mA\mG\vy)^T \mA(\mI + s_{\mW_2}^{(t)}\mG\vy(\mG\vy)^T) \mG \vy &= (\mG\vy)^T \mA^T\mA(\mI + s_{\mW_2}^{(t)}\mG\vy(\mG\vy)^T) \mG \vy \\
    &= ||\mA\mG\vy||^2 +  s_{\mW_2}^{(t)}||\mA\mG\vy||^2||\mG\vy||^2
\end{align*}

Now, 

\begin{align}
    \vh_3^{(t)} 
    &= s_1^{(t)} s_{\mW_3}^{(t)} \left( ||\mA\mG\vy||^2 +  s_{\mW_2}^{(t)}||\mA\mG\vy||^2||\mG\vy||^2\right)\vy \\
    &= s_3^{(t)}\vy \quad (\text{denoting } s_3^{(t)} = s_1^{(t)} s_{\mW_3}^{(t)}\left( ||\mA\mG\vy||^2 +  s_{\mW_2}^{(t)}||\mA\mG\vy||^2||\mG\vy||^2\right)  )\\
    \ve^{(t)} &= \vy - \vh_3^{(t)} = (1-s_3^{(t)})\vy \label{eqn: lemma_1_error}
\end{align}

\begin{align*}
    \mW_1^{(t+1)} &= \mW_1^{(t)} + \eta_1 \mG\ve^{(t)}\vx^T \\
    &= s_{\mW_1}^{(t)}\mG\vy\vx^T + \eta_1\mG(1-s_3^{(t)})\vy\vx^T\\
    &= s_{\mW_1}^{(t+1)}\mG\vy\vx^T  \quad \left(\text{where }s_{\mW_1}^{(t+1)} = s_{\mW_1}^{(t)} + \eta_1(1-s_3^{(t)}) \right)
\end{align*}

\begin{align*}
    \mW_2^{(t+1)} &= \mW_2^{(t)} + \eta_2 \mW_2^{(t)}\mG\ve^{(t) }{\vh_1^T}^{(t)} \\
    &= \mA + s_{\mW_2}^{(t)}\mA\mG\vy{(\mG\vy)}^{T} + \eta_2 \left(\mA + s_{\mW_2}^{(t)}\mA\mG\vy{(\mG\vy)}^{T}\right) \mG (1-s_3^{(t)})\vy {(s_1^{(t)}\mG\vy)}^{T} \\
    &= \mA + (s_{\mW_2}^{(t)}+  \eta_2 s_1^{(t)}(1-s_3^{(t)})) \mA\mG\vy{(\mG\vy)}^{T} + \eta_2 s_1^{(t)} s_{\mW_2}^{(t)} (1-s_3^{(t)}) \mA\mG\vy{(\mG\vy)}^{T} \mG\vy{(\mG\vy)}^{T} \\
    &= \mA + (s_{\mW_2}^{(t)}+  \eta_2 s_1^{(t)} (1-s_3^{(t)})) \mA\mG\vy{(\mG\vy)}^{T} + \eta_2 s_1^{(t)} s_{\mW_2}^{(t)}  (1-s_3^{(t)}) \mA\mG\vy||\mG\vy||^2 {(\mG\vy)}^{T} \\
    &= \mA +  s_{\mW_2}^{(t+1)} \mA\mG\vy{(\mG\vy)}^{T} 
\end{align*}

Here, $ s_{\mW_2}^{(t+1)} = s_{\mW_2}^{(t)}+  \eta_2 s_1^{(t)} (1-s_3^{(t)}) + \eta_2 s_1^{(t)} s_{\mW_2}^{(t)}  (1-s_3^{(t)}) ||\mG\vy||^2$

\begin{align*}
    \mW_3^{(t+1)} &= \mW_3^{(t)} + \eta_3 \ve^{(t)}{\vh_2^T}^{(t)}\\
    &= s_{\mW_3}^{(t)}\vy(\mA\mG\vy)^T + \eta_3 s_1^{(t)} (1-s_3^{(t)})\vy(\mW_2^{(t)}\mG\vy)^T \\
    &= s_{\mW_3}^{(t)}\vy(\mA\mG\vy)^T + \eta_3 s_1^{(t)} (1-s_3^{(t)})\vy(\mA(\mI + s_{\mW_2}^{(t)}\mG\vy(\mG\vy)^T)\mG\vy)^T \\ 
    &= s_{\mW_3}^{(t)}\vy(\mA\mG\vy)^T + \eta_3 s_1^{(t)} (1-s_3^{(t)}) \left(1 + s_{\mW_2}^{(t)}||\mG\vy||^2\right) \vy(\mA\mG\vy)^T \\
    &= s_{\mW_3}^{(t+1)}\vy(\mA\mG\vy)^T \quad (\text{where }s_{\mW_3}^{(t+1)} =  s_{\mW_3}^{(t)}+\eta_3 s_1^{(t)} (1-s_3^{(t)})\left(1 + s_{\mW_2}^{(t)}||\mG\vy||^2\right) )
\end{align*}

\begin{align*}
    \vh_1^{(t+1)} &= \mW_1^{(t+1)}\vx \\
    &= \left(\mW_1^{(t)} +\eta_1 \mG\ve^{(t)}\vx^T\right)\vx \\
    &= \vh_1^{(t)} + \eta_1 (1-s_3^{(t)})\mG\vy||\vx||^2 \\
    &= s_1^{(t)}\mG\vy + \eta_1 (1-s_3^{(t)}) ||\vx||^2 \mG\vy \\
    &= s_1^{(t+1)}\mG\vy \quad (\text{where } s_1^{(t+1)} =  s_1^{(t)} + \eta_1 (1-s_3^{(t)}) ||\vx||^2 )
\end{align*}
\end{proof}


\begin{theorem}
Under the same conditions in \cref{lemma_1}, the FTP and BP's gradient direction for $\mW_1,\mW_2$ are within $90\degree$ of each other, i.e. 
\begin{align}
    \langle \mG\ve, \mW_2^T\mW_3^T\ve \rangle > 0 \label{eqn: thm_1_w_1}\\
    \langle \mW_2\mG\ve, \mW_3^T\ve \rangle > 0 \label{eqn: thm_1_w_2}
\end{align}
\end{theorem}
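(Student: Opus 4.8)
The plan is to feed the closed forms from \cref{lemma_1} directly into the two inner products and show that both collapse to a single scalar whose sign is easy to control. Writing $\ve = (1-s_3)\vy$ from \cref{eqn: lemma_1_error} and abbreviating $c = 1-s_3$ and $\beta = 1 + s_{\mW_2}\|\mG\vy\|^{2}$, I would first use $\mW_3 = s_{\mW_3}\vy(\mA\mG\vy)^{T}$ to get $\mW_3^{T}\ve = c\,s_{\mW_3}\|\vy\|^{2}\,\mA\mG\vy$. Then, using $\mW_2 = \mA(\mI + s_{\mW_2}\mG\vy(\mG\vy)^{T})$ together with the rank-one identity $(\mG\vy)(\mG\vy)^{T}\mG\vy = \|\mG\vy\|^{2}\,\mG\vy$, I would compute $\mW_2\mG\vy = \beta\,\mA\mG\vy$ for the second product and $\mW_2^{T}\mA\mG\vy = \mA^{T}\mA\mG\vy + s_{\mW_2}\|\mA\mG\vy\|^{2}\,\mG\vy$ for the first.

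Substituting these, I expect both quantities to reduce to the \emph{same} scalar,
\[
\langle \mG\ve,\ \mW_2^{T}\mW_3^{T}\ve\rangle \;=\; \langle \mW_2\mG\ve,\ \mW_3^{T}\ve\rangle \;=\; c^{2}\,s_{\mW_3}\,\|\vy\|^{2}\,\|\mA\mG\vy\|^{2}\,\beta,
\]
because in both cases the cross term contracts through $(\mG\vy)^{T}\mA^{T}\mA\mG\vy = \|\mA\mG\vy\|^{2}$. Since $c^{2} = (1-s_3)^{2} > 0$ for any non-zero $\ve$ and $\|\vy\|^{2}\|\mA\mG\vy\|^{2} > 0$ (assuming $\mA\mG\vy \neq \vzero$), both claims \cref{eqn: thm_1_w_1,eqn: thm_1_w_2} reduce to the single sign condition $s_{\mW_3}\,\beta > 0$.

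To establish that sign I would analyze the scalar recursions recorded in the proof of \cref{lemma_1}. The key observation is that $s_{\mW_2}$ and $s_{\mW_3}$ receive proportional increments, $s_{\mW_2}^{(t+1)}-s_{\mW_2}^{(t)} = \eta_2\,s_1^{(t)}(1-s_3^{(t)})\beta^{(t)}$ and $s_{\mW_3}^{(t+1)}-s_{\mW_3}^{(t)} = \eta_3\,s_1^{(t)}(1-s_3^{(t)})\beta^{(t)}$, so from the common initialization $s_{\mW_2}^{(0)}=s_{\mW_3}^{(0)}=0$ one gets $s_{\mW_3}^{(t)} = (\eta_3/\eta_2)\,s_{\mW_2}^{(t)}$ for all $t$; hence $s_{\mW_3}$ and $s_{\mW_2}$ always share the same sign. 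I would then prove by induction that $s_1^{(t)} \ge 0$ and $s_{\mW_2}^{(t)} \ge 0$ (whence $\beta^{(t)} \ge 1 > 0$ and $s_{\mW_3}^{(t)} \ge 0$), with strict positivity of $s_{\mW_2},s_{\mW_3}$ once $t \ge 2$: indeed $s_1$ grows first ($s_1^{(1)} = \eta_1\|\vx\|^{2} > 0$), which feeds a strictly positive increment into $s_{\mW_2}$ and $s_{\mW_3}$ at the next step. Combined with $s_{\mW_3}\beta = (\eta_3/\eta_2)\,s_{\mW_2}(1 + s_{\mW_2}\|\mG\vy\|^{2}) > 0$, this yields strict positivity of both inner products.

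The main obstacle is keeping the induction's sign assumptions alive, and this hinges entirely on the factor $(1-s_3^{(t)})$ staying non-negative: every increment to $s_1, s_{\mW_2}, s_{\mW_3}$ carries this factor, so a single overshoot $s_3^{(t)} > 1$ (the output passing its target) could flip the increment's sign and destroy monotonicity. In the gradient-flow / sufficiently-small-learning-rate regime this is clean, since $s_3$ increases monotonically toward $1$ from below and never crosses it, making $1-s_3^{(t)} \in (0,1]$ automatic and closing the induction. I would therefore state the positivity under the standing assumption $0 \le s_3^{(t)} < 1$ (equivalently, a step-size bound ruling out overshoot), noting that this is exactly the pre-convergence regime in which $\ve \neq \vzero$ and the alignment claim is meaningful.
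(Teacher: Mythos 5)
Your algebraic core is exactly the paper's: both proofs substitute the closed forms of \cref{lemma_1}, use $(\mG\vy)(\mG\vy)^T\mG\vy = \|\mG\vy\|^2\mG\vy$ and $(\mG\vy)^T\mA^T\mA\mG\vy = \|\mA\mG\vy\|^2$ to contract everything to rank-one terms, and arrive at the same scalar --- the paper writes it as $s_{\mW_3}(1-s_3)^2\|\mA\mG\vy\|^2\|\vy\|^2 + s_{\mW_3}s_{\mW_2}\|\mA\mG\vy\|^2(1-s_3)^2\|\mG\vy\|^2\|\vy\|^2$, which is precisely your factored form $c^2 s_{\mW_3}\|\vy\|^2\|\mA\mG\vy\|^2\beta$. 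Where you genuinely diverge is the final sign step, and there your version is more rigorous than the published one. The paper closes by asserting that ``$s_{\mW_2}, s_{\mW_3}$ are positive scalars (\cref{lemma_1}),'' but \cref{lemma_1} only asserts the \emph{existence} of the scalars and in fact initializes all of them to zero, so positivity is never actually established; indeed at $t=0$ and $t=1$ one has $s_{\mW_3}^{(t)}=0$ and both inner products vanish (the BP gradient is identically zero because $\mW_3=\vzero$), so the strict inequality cannot hold at those steps. Your analysis of the recursions fixes this: the observation that $s_{\mW_2}$ and $s_{\mW_3}$ receive proportional increments, hence $s_{\mW_3}^{(t)} = (\eta_3/\eta_2)\,s_{\mW_2}^{(t)}$ for all $t$, together with the induction giving $s_1^{(t)}\ge 0$, $s_{\mW_2}^{(t)}\ge 0$, $\beta^{(t)}\ge 1$, and strict positivity for $t\ge 2$, is correct against the update equations recorded in the paper's proof of \cref{lemma_1}. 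Your caveat about the factor $(1-s_3^{(t)})$ is also substantive and correct: the paper's unconditional claim implicitly assumes no overshoot, since a step with $s_3^{(t)}>1$ would flip the sign of the increments and could break the monotonicity the sign argument relies on, so stating the theorem under $0\le s_3^{(t)}<1$ (or a step-size bound) is the honest hypothesis. In short: same decomposition and same closing scalar, but your proposal supplies the sign induction the paper omits and makes explicit the regime (pre-convergence, no overshoot, $t\ge 2$) in which the strict inequality is actually valid.
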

\begin{proof}

Note  $\langle \mG\ve, \mW_2^T\mW_3^T\ve \rangle = \langle \mW_2\mG\ve, \mW_3^T\ve \rangle$

From \cref{eqn:lemma_1_3,eqn:lemma_1_4} of \cref{lemma_1}, we have
\begin{align}
    \mW_3\mW_2 &= s_{\mW_3}\vy(\mA\mG\vy)^T \mA(\mI + s_{\mW_2}\mG\vy(\mG\vy)^T)  \nonumber \\ 
    &= s_{\mW_3}\vy(\mG\vy)^T\mA^T\mA (\mI + s_{\mW_2}\mG\vy(\mG\vy)^T) \label{eqn: W3W2}  \\
    &= s_{\mW_3}\vy(\mG\vy)^T\mA^T\mA + s_{\mW_3}s_{\mW_2}||\mA\mG\vy||^2\vy(\mG\vy)^T \nonumber \\
    &= s_{\mW_3}\vy(\mG\vy)^T\mA^T\mA + s_{3,2}\vy(\mG\vy)^T \quad (\text{denoting } s_{3,2} = s_{\mW_3}s_{\mW_2}||\mA\mG\vy||^2 ) \nonumber
\end{align}
Now for the weight matrix of the first hidden layer, $\mW_1$
\begin{align*}
    \langle \mG\ve, \mW_2^T\mW_3^T\ve \rangle &=  (\mG\ve)^T (\mW_3\mW_2)^T\ve  \\
    &= s_{\mW_3} (1-s_3)^2 (\mG\vy)^T ( \vy(\mG\vy)^T\mA^T\mA)^T \vy +s_{3,2} (1-s_3)^2 (\mG\vy)^T ( \vy(\mG\vy)^T)^T \vy \\
    &= s_{\mW_3} (1-s_3)^2 ||\mA\mG\vy||^2||\vy||^2+s_{3,2} (1-s_3)^2 ||\mG\vy||^2||\vy||^2 > 0 \quad \forall \vy \neq \mathbf{0} 
\end{align*}
The last inequality follows from $s_{\mW_2}, s_{\mW_3}$ being  positive scalars. (\cref{lemma_1}).  
\end{proof}

\subsection{Relation with Gauss-Newton Update for Hidden Neurons}\label{sec:alignment_GN}

In linear neural network with two hidden layers, the feedback signal (under FTP) for update in $\vh_1$ is $\mG\ve$, and we postulate that it is aligned with the Gauss-Newton update signal, $(\mW_3\mW_2)^\dagger$. 
If we can show there exists a relation such as $s\mG\ve = (\mW_3\mW_2)^\dagger \ve$ with a positive scaler $s$, then we can conclude update direction for the hidden neurons $\vh_1$ is aligned with the Gauss-Newton direction.

\begin{theorem}
Under the same conditions in \cref{lemma_1} and $\mW_2$ being initialized with $\mA$ such that $\mA^T\mA = \mI$ (requires $d_2\geq d_1$), there exists a positive scaler $s$ such that 
\begin{align}
    s\mG\ve = (\mW_3\mW_2)^\dagger \ve
\end{align}
\end{theorem}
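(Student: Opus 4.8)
The plan is to exploit \cref{lemma_1} to collapse the composite Jacobian $\mW_3\mW_2$ into an explicit rank-one matrix, at which point the pseudoinverse is available in closed form and the claimed identity reduces to matching scalar factors. First I would substitute $\mW_2 = \mA(\mI + s_{\mW_2}\mG\vy(\mG\vy)^T)$ and $\mW_3 = s_{\mW_3}\vy(\mA\mG\vy)^T$ from \cref{lemma_1} into the product. The orthonormality hypothesis $\mA^T\mA = \mI$ is what makes this work: the inner factor contracts as $(\mA\mG\vy)^T\mA = (\mG\vy)^T\mA^T\mA = (\mG\vy)^T$, and the quadratic term collapses via $(\mG\vy)^T\mG\vy = ||\mG\vy||^2$. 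This yields the single rank-one expression
\begin{equation}
\mW_3\mW_2 = c\,\vy(\mG\vy)^T, \qquad c = s_{\mW_3}\bigl(1 + s_{\mW_2}||\mG\vy||^2\bigr),
\end{equation}
where $c > 0$ since $s_{\mW_2}, s_{\mW_3}$ are positive scalars by \cref{lemma_1}.

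Next I would invoke the standard pseudoinverse identity for a rank-one outer product, namely $(\vu\vv^T)^\dagger = \vv\vu^T/(||\vu||^2||\vv||^2)$ for nonzero $\vu,\vv$, which is checked directly against the four Moore--Penrose conditions or read off the rank-one SVD. Identifying $\vu = \vy$ and $\vv = \mG\vy$ gives $(\mW_3\mW_2)^\dagger = \tfrac{1}{c}\,\mG\vy\,\vy^T/(||\vy||^2||\mG\vy||^2)$. I would then substitute the error form $\ve = (1-s_3)\vy$ established in \cref{lemma_1}; applying the pseudoinverse, the cancellation $\vy^T\vy = ||\vy||^2$ removes one norm and leaves
\begin{equation}
(\mW_3\mW_2)^\dagger\ve = \frac{1-s_3}{c\,||\mG\vy||^2}\,\mG\vy = \frac{1}{c\,||\mG\vy||^2}\,\mG\ve,
\end{equation}
using $\mG\ve = (1-s_3)\mG\vy$. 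Reading off $s = 1/(c\,||\mG\vy||^2)$ then establishes the claim, with positivity of $s$ inherited from $c>0$.

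I do not anticipate a genuine obstacle, since the argument is essentially algebraic bookkeeping once the rank-one structure is exposed. The only points that require care are (i) the rank-one collapse itself, which rests entirely on $\mA^T\mA = \mI$ and is the reason the dimension condition $d_2 \geq d_1$ is imposed so that such an $\mA$ exists, and (ii) verifying that $c > 0$ and $\mG\vy \neq \mathbf{0}$ so that $s$ is a well-defined positive scalar, both of which follow from the positivity of the $s$-scalars in \cref{lemma_1}. A secondary sanity check I would perform is dimensional consistency: $(\mW_3\mW_2)^\dagger \in \sR^{d_1 \times d_y}$ and $\mG \in \sR^{d_1 \times d_y}$, so both sides of $s\mG\ve = (\mW_3\mW_2)^\dagger\ve$ live in $\sR^{d_1}$, confirming that the identity is well-posed as a statement about the feedback direction for the hidden neurons $\vh_1$.
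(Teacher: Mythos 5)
Your proposal is correct and follows essentially the same route as the paper: both use $\mA^T\mA=\mI$ to collapse $\mW_3\mW_2$ into the rank-one matrix $s'_{3,2}\,\vy(\mG\vy)^T$, compute its Moore--Penrose pseudoinverse, apply it to $\ve=(1-s_3)\vy$, and absorb all scalar factors into $s$. The only cosmetic difference is that you invoke the closed-form rank-one identity $(\vu\vv^T)^\dagger = \vv\vu^T/(||\vu||^2||\vv||^2)$ directly, whereas the paper reaches the same expression via a chain of reverse-order-law pseudoinverse manipulations; your version also tracks the constant $c=s'_{3,2}$ and the positivity of $s$ somewhat more explicitly than the paper does.
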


\begin{proof}

From \cref{eqn: W3W2} with $\mA^T\mA =\mI$, 
\begin{align*}
    \mW_3\mW_2 &= s_{\mW_3}\vy(\mG\vy)^T  + s_{\mW_3}s_{\mW_2}||\mG\vy||^2\vy(\mG\vy)^T\\
    &= s'_{3,2}\vy(\mG\vy)^T \quad (\text{where }s'_{3,2} =   s_{\mW_3} +s_{\mW_3}s_{\mW_2}||\mG\vy||^2 )
\end{align*}

Showing $s\mG\ve = (\mW_3\mW_2)^\dagger \ve$ is equivalent to 
$s\mG\vy = (\vy(\mG\vy)^T)^\dagger \vy$ since we have $\ve = (1-s_3)\vy$ from \cref{eqn: lemma_1_error}.

\begin{align*}
    (\vy(\mG\vy)^T)^\dagger\vy  &= 
     (\vy^T\mG^T)^\dagger\vy^\dagger\vy \\
    &=  (\mG^T)^\dagger (\vy^{T})^{\dagger} \vy^\dagger\vy\\
    &=  (\mG^T)^\dagger (\vy^{T})^{\dagger} \vy^{T}(\vy^{T})^{\dagger} \\
    &=  (\mG^T)^\dagger(\vy^{T})^{\dagger} \\
    &=  ((\mG\vy)^T)^{\dagger} \\
    &=  \mG\vy ((\mG\vy)^T\mG\vy)^{-1}\\
    &=  ||\mG\vy||^{-2} \mG\vy \\
    &= s\mG\vy
\end{align*}
\end{proof}

\begin{remark}
When $d_2\geq d_1$, we can construct $\mA$ 
by selecting $d_1$ orthonormal column vectors such that $\mA^T\mA=\mI$. 
\end{remark}
\section{Model Architecture and Implementation Details}
\label{model_details}

We evaluated the performance of FTP, BP, DTP, and PEPITA across three model families—fully connected networks (FC), convolutional neural networks (CNN), and recurrent neural networks (RNN)—on image classification and multivariate time-series forecasting tasks. All models were trained using stochastic gradient descent (SGD) with momentum 0.9, batch size 64, and cross-entropy loss. Unless otherwise stated, all hidden layers used the \textit{tanh} activation function, and all parameters (including the projection matrix $\mathbf{G}$ in FTP) were initialized using He initialization~\cite{he2015delving}.

\subsection{Fully Connected Networks (FC)}

We evaluated FC networks on MNIST \cite{mnist}, Fashion-MNIST (FMNIST) \cite{Xiao2017FashionMNISTAN}, and CIFAR-10 \cite{cifar10}. For MNIST and FMNIST, input images were flattened to 784 dimensions; for CIFAR datasets, to 3072 dimensions. The architecture consisted of two hidden layers with 1024 and 128 neurons, followed by a softmax output layer with 10 units for MNIST/FMIST/CIFAR-10. A dropout rate of 0.1 was applied after each hidden layer. All models were trained for 100 epochs, with the learning rate decayed by a factor of 10 at epochs 60 and 90.

\subsection{Convolutional Neural Networks (CNN)}

We applied CNNs to the same four image classification datasets: MNIST \cite{mnist}, CIFAR-10, and CIFAR-100 \cite{cifar10}. The CNN architecture consisted of a 2D convolutional layer with 32 output channels and a 5$\times$5 kernel, followed by a 2$\times$2 max pooling layer. The output feature maps were flattened and passed to a softmax output layer with 10 or 100 units, depending on the dataset. All CNN models used \textit{tanh} activations except the final layer where softmax activation function was used. The models were trained for 100 epochs, and followed the same learning rate schedule as the FC models (decayed at epochs 60 and 90).

\subsection{Recurrent Neural Networks (RNN)}

We evaluated RNNs on three multivariate time-series datasets:

\begin{itemize}
    \item \textit{Electricity}: Electricity consumption data from 321 clients, recorded every 15 minutes from 2012 to 2014, and resampled to hourly resolution.
    \item \textit{METR-LA}: Hourly road occupancy rates from 2015--2016, recorded by sensors on San Francisco Bay Area freeways.
    \item \textit{Solar-Energy}: Solar power output sampled every 10 minutes during 2006 from 137 photovoltaic plants in Alabama.

\end{itemize}

Each task was structured as a sequence prediction problem, where a sliding window of the previous 24 time steps was used to predict the 25\textsuperscript{th}. The RNN architecture consisted of a single recurrent layer with 512 hidden units and \textit{tanh} activation. All RNN models were trained for 500 epochs, with the learning rate decayed by a factor of 10 at epochs 300 and 450.

\section{Impact of Strict Weight Symmetry Requirement of BP in Emerging Analog Hardware}
\label{appendix_weightsymmtery_bp}

Backpropagation (BP) relies on backward matrices to propagate errors, which must maintain symmetry with the forward weight matrices in principle mathematically, which is easily achievable in the case of digital computers. However, in analog computing hardware such as RRAM and other non-volatile memories, non-idealities such as programming errors, thermal noise, and random telegraph noise can disrupt this symmetry, leading to significant degradation in performance. Our results, as shown in Figure~\ref{performance_comparison}, demonstrate the vulnerability of BP under these conditions, particularly in devices with low-bit precision. Here we considered the cases when some weight parameters in a matrix got corrupted due to device non-idealities, and the asymmetry here refers to the amount of weights getting corrupted by 10\% margin from where the values should be. For instance, in 4-bit systems \cite{4bitmem_WLu, 4bitmem_JY}, test accuracy drops below 80\% when only 20\% of the matrices are affected by such errors. The vulnerability is even more pronounced in 3-bit systems, where a mere 5\% mismatch between forward and backward matrices causes test accuracy to plummet below 60\%. As asymmetry increases, test accuracy steadily declines, eventually reaching approximately 15\%, which is equivalent to random guess. The standard deviation of test accuracy also increases, which indicates greater instability in performance. Each case was tested multiple times, and the shadowed region in Figure~\ref{performance_comparison} represents the standard deviation of test accuracy across trials. In contrast, FTP utilizes fixed, random backward matrices for target estimation, eliminating the dependency on symmetric forward-backward matrix relationships. This design inherently makes FTP resilient to hardware-induced noise and non-idealities, ensuring robust performance even in low-bit precision environments.
\begin{figure}[h]
\vspace{-3mm}
\centering
    \includegraphics[width=.5\linewidth]{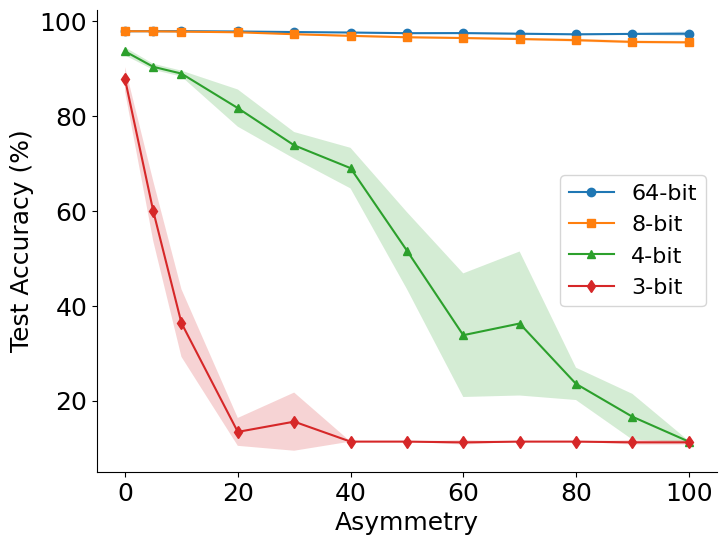}
    \vskip -.1in
    \caption{Performance of BP when asymmetry is introduced in backward matrices due to read noise.}
    \label{performance_comparison}
      \vskip -.1in
\end{figure}

\end{document}